\documentclass[a4paper,fleqn]{article}

\usepackage[authoryear,longnamesfirst]{natbib}

\usepackage[utf8]{inputenc}
\usepackage{amsmath,amssymb}
\usepackage{mathtools}
\usepackage{bm}
\usepackage{booktabs}
\usepackage{cleveref}
\usepackage{algorithm}
\usepackage{algpseudocode}
\usepackage{enumitem}
\usepackage{bigstrut}
\usepackage{adjustbox}
\usepackage{array}
\usepackage{colortbl}
\usepackage{textgreek}
\usepackage{subfig}
\usepackage{graphicx}
\usepackage{amsthm}

\newcommand{\tr}{\mathrm{tr}}

\graphicspath{{images/}}

\newtheorem{proposition}{Proposition}
\newtheorem{definition}{Definition}
\newtheorem{remark}{Remark}

\begin{document}

\title{A Family of Divergence Measures for Evaluating the Reconstruction Quality of Explainable Ensemble Trees}

\author{
  Massimo Aria\textsuperscript{1} \and
  Agostino Gnasso\textsuperscript{1,*} \and
  Carmela Iorio\textsuperscript{1}
}

\date{%
  \textsuperscript{1}Department of Economics and Statistics, University of Naples Federico II, Naples, Italy\\[0.5em]
  \textsuperscript{*}Corresponding author: agostino.gnasso@unina.it
}

\maketitle

\begin{abstract}
Validating interpretable surrogate models for ensemble learners requires measuring \emph{agreement} between the ensemble's internal representation and its surrogate approximation, rather than mere \emph{association}. Correlation-based approaches are scale-invariant and fail to detect systematic discrepancies in co-occurrence structure. We propose a statistical framework grounded in the agreement-association distinction, centered on the normalized Loss of Interpretability (nLoI). Rooted in the Cressie--Read power divergence family at \textlambda$=-2$, the nLoI admits a closed-form decomposition into within-node and between-node components, providing a unique diagnostic capability to identify precisely where and why reconstruction fails. The framework incorporates four complementary measures capturing distinct structural facets of approximation quality. A unified permutation testing procedure delivers valid inference for all measures within a single resampling pass. Theoretical properties, including boundedness and symmetry, are established for each metric. Monte Carlo simulations and empirical evaluations confirm exact Type~I error control and demonstrate that these measures detect reconstruction fidelity gradients invisible to correlation-based alternatives. The framework is developed and illustrated in the context of Explainable Ensemble Trees (E2Tree), and empirical evaluation on three benchmark datasets illustrates the practical utility of the framework.
\end{abstract}

\noindent\textbf{Keywords:} Proximity matrix comparison, Cressie-Read divergence, Explainable machine learning, Interpretable machine learning, Ensemble models, Agreement measures, E2Tree

\section{Introduction}
\label{sec:introduction}
Ensemble methods achieve predictive accuracy by combining multiple weak learners whose individual errors partially cancel under aggregation. This variance-reduction principle, common to bagging, boosting, and their hybrids, underpins the empirical dominance of these methods across benchmarks. Random Forest \citep{Breiman2001} exemplifies the paradigm: each constituent tree is trained on a bootstrap sample of the data with a random predictor subset at each split, and predictions are aggregated by majority vote (classification) or averaging (regression). The result is a family of models that consistently ranks among the most accurate across diverse domains \citep{Caruana2006}.

The accuracy of Random Forest comes at a structural cost. A single decision tree exposes its reasoning through an interpretable sequence of binary splits; an ensemble of hundreds such trees forfeits that transparency entirely. In domains where the \emph{mechanism} of a prediction matters (regulatory compliance, clinical decision-making, scientific discovery), this opacity is not merely inconvenient but methodologically untenable \citep{Rudin2019}.

To address the tension between accuracy and interpretability, \citet{Aria2024} introduced Explainable Ensemble Trees (E2Tree), a methodology that constructs a single, interpretable tree structure to explain the decision logic of ensemble models. E2Tree operates on the co-occurrence structure of the ensemble: it captures how frequently pairs of observations are assigned to the same terminal nodes across the constituent trees. By constructing an explanatory tree that minimizes within-node dissimilarity according to this co-occurrence measure, E2Tree provides both local explanations and global structural insights. The methodology has been extended to regression contexts \citep{Aria2025}, demonstrating versatility across supervised learning tasks. This positions E2Tree within the growing literature on integrating machine learning algorithms into principled statistical frameworks \citep{Emmenegger2023}.

Deploying E2Tree in practice raises an immediate question of fidelity: how faithfully does the single explanatory tree recover the ensemble's proximity structure? The stakes are not abstract. When interpretability is invoked for regulatory, ethical, or scientific purposes, a poorly reconstructed E2Tree does not merely underperform: it actively misleads, producing conclusions about feature importance and decision boundaries that reflect the approximation rather than the underlying ensemble.

Previous validation approaches have relied on the Mantel test \citep{Mantel1967}, a non-parametric procedure for assessing \emph{association} between similarity matrices. While valuable, the Mantel test answers a fundamentally different question from the one E2Tree validation requires. The distinction parallels that between \emph{correlation} and \emph{concordance} in method comparison studies \citep{Bland1986,Lin1989}: two measurement methods can be perfectly correlated ($r = 1$) yet systematically disagree, for instance when $\hat{O}_T = c \cdot O$ for any constant $c \neq 1$. The Mantel test, being scale-invariant, would declare perfect association in such a case. But for E2Tree validation, we need to know whether the \emph{actual proximity values} are faithfully reproduced, not merely whether their pattern is preserved.

The distinction has concrete consequences in the E2Tree setting. The ensemble matrix $O$ contains continuous values in $(0,1)$, whereas $\hat{O}_T$ is crisp and block-diagonal with many zeros. Even when the block structure perfectly captures the group structure of $O$, element-wise values will diverge because $O$ is fuzzy and $\hat{O}_T$ is not. A correlation-based measure is blind to this discrepancy; a scale-sensitive divergence measure is not.

The parallel to clinical method comparison runs deeper than analogy. \citet{Bland1986} demonstrated that the correlation coefficient is fundamentally unsuited to assessing agreement between two measurement methods, precisely because it is invariant to systematic offsets and scale differences. The same structural objection applies to the Mantel correlation in the E2Tree context: reconstruction fidelity demands a direct, scale-sensitive measure of absolute agreement.

In this paper, we develop a family of divergence and similarity measures specifically designed for evaluating E2Tree reconstruction quality, centered on the distinction between agreement and association. Our contributions are threefold. First, we introduce the normalized Loss of Interpretability (nLoI), a robustified Neyman-type statistic connected to the Cressie--Read power divergence family \citep{Cressie1984} at $\lambda = -2$. The nLoI's unique feature (and the primary methodological contribution of this work) is its \emph{decomposability} into within-node and between-node components (Proposition~\ref{prop:decomposability}), which provides a diagnostic tool for identifying \emph{where} and \emph{why} reconstruction fails. No correlation-based measure, including the Mantel test, offers this capability. Second, we complement the nLoI with four alternative measures (Hellinger distance, weighted Root Mean Squared Error (wRMSE), the RV coefficient \citep{Robert1976}, and the Structural Similarity Index (SSIM; \citealp{Wang2004})), forming a toolkit where each measure captures a distinct aspect of matrix agreement. Third, we develop a unified permutation testing framework based on simultaneous row/column permutation and validate it through Monte Carlo simulations confirming correct Type~I error control and adequate statistical power across all measures.

The paper is organized as follows. Section~\ref{sec:background} reviews the E2Tree methodology and establishes notation. Section~\ref{sec:ngoi} presents the normalized LoI and its connection to the Cressie--Read family. Section~\ref{sec:alternatives} introduces the four alternative measures with their theoretical properties. Section~\ref{sec:permutation} develops the unified permutation testing framework. Section~\ref{sec:simulation} presents the Monte Carlo simulation study. Section~\ref{sec:empirical} provides empirical evaluation on three benchmark datasets. Section~\ref{sec:discussion} offers practical guidelines, and Section~\ref{sec:conclusion} concludes.

\section{Background}
\label{sec:background}

Let $\mathcal{H} = \{H_1, H_2, \ldots, H_B\}$ be an ensemble model composed of $B$ weak learners, and let $\{(x_1, y_1), \ldots, (x_n, y_n)\}$ be the training sample of size $n$. For each observation $i$, we denote by $u_i = (y_i, x_i)$ the vector comprising the response and predictor values.

\subsection{The Ensemble Co-occurrence Matrix}

The ensemble induces a co-occurrence structure that captures how frequently pairs of observations fall into the same terminal nodes across the $B$ weak learners. This structure is encoded in the ensemble co-occurrence matrix $O = [o_{ij}]$, defined as:

\begin{equation}
	\label{eq:rf_cooccurrence}
	o_{ij} = \frac{\sum_{b=1}^{B} I(u_i \wedge u_j) \cdot W_{ij|b}^{(t)}}{\max\left(\sum_{b=1}^{B} u_i W_{i|b}^{(t)},\; \sum_{b=1}^{B} u_j W_{j|b}^{(t)}\right)}, \quad 0 \leq o_{ij} \leq 1,
\end{equation}

where $I(u_i \wedge u_j)$ is an indicator function that equals 1 if observations $i$ and $j$ fall into the same terminal node of tree $b$, and 0 otherwise. The term $W_{ij|b}^{(t)} \in [0,1]$ is a local goodness-of-fit measure at the terminal node $t$ of the $b$-th tree containing both observations. In classification, this weight corresponds to the correct classification rate \citep{Aria2024}; in regression, it is based on the normalized mean squared error \citep{Aria2025}.

The matrix $O$ aggregates co-occurrence information across all $B$ weak learners, weighted by local model performance, ensuring that $o_{ij} \in [0,1]$. This matrix represents the ensemble's inherent proximity structure, acting as the ``ground truth'' that E2Tree aims to reconstruct.

The corresponding dissimilarity matrix is defined as $D = 1 - O$, where $d_{ij} = 1 - o_{ij}$ measures the dissimilarity between observations $i$ and $j$ with respect to the ensemble classifier.

\subsection{From Ensemble Structure to E2Tree Reconstruction}

The construction of the E2Tree co-occurrence matrix $\hat{O}$ represents an attempt to reconstruct the ensemble structure $O$ through a single interpretable tree. This reconstruction process can be understood through three stages.

\textit{Stage 1: Encoding ensemble knowledge}. The ensemble $\mathcal{H}$ encodes its results in the co-occurrence matrix $O$. High values of $o_{ij}$ indicate that observations $i$ and $j$ are consistently grouped together across multiple trees, suggesting they share similar characteristics with respect to the ensemble's learned decision boundaries. Therefore, the matrix $O$ is a representation of the ensemble's proximity structure.

\textit{Stage 2: Tree construction via dissimilarity minimization}. The E2Tree algorithm works on the derived dissimilarity matrix $D = 1 - O$. By recursively partitioning the data to minimize within-node dissimilarity, E2Tree groups observations with high ensemble co-occurrence (high $o_{ij}$) into shared terminal nodes. The splitting criterion, defined in \citet{Aria2024}, ensures that each partition maximally separates dissimilar observations while keeping similar observations together.

\textit{Stage 3: Induced co-occurrence structure}. Once the E2Tree is constructed, it induces a new co-occurrence matrix $\hat{O}$ through its terminal node structure. The quality of this reconstruction (how well $\hat{O}$ approximates $O$) is precisely what our family of measures quantifies. The formal definition of $\hat{O}$ is presented in the following section.

\subsection{The E2Tree Co-occurrence Matrix}

Unlike the ensemble model, which aggregates information across $B$ weak learners, E2Tree produces a single tree structure $T$. The E2Tree co-occurrence matrix $\hat{O}_T = [\hat{o}_{ij}]$ is computed from this single tree. The E2Tree co-occurrence value for pair $(i,j)$ is defined as follows:
\begin{equation}
	\label{eq:reconstruction_logic}
	\hat{o}_{ij} =
	\begin{cases}
		W_{ij}^{(t)} & \text{if } i \text{ and } j \text{ fall in the same terminal node } t, \\
		0 & \text{otherwise}.
	\end{cases}
\end{equation}

The reconstruction can be understood as follows: E2Tree creates a partition of the observation space such that observations with high ensemble co-occurrence ($o_{ij} \approx 1$) are likely to fall in the same terminal node ($\hat{o}_{ij} > 0$), while observations with low ensemble co-occurrence ($o_{ij} \approx 0$) are likely to be separated into different terminal nodes ($\hat{o}_{ij} = 0$).

The key difference between the ensemble co-occurrence matrix $O$ in equation~\eqref{eq:rf_cooccurrence} and the E2Tree matrix $\hat{O}_T$ in equation~\eqref{eq:reconstruction_logic} reflects the fundamental distinction between ensemble and single-tree structures. While $o_{ij}$ aggregates weighted co-occurrences across $B$ weak learners with appropriate normalization, $\hat{o}_{ij}$ captures co-occurrence in a single explanatory tree.
Both matrices are symmetric with unit diagonal: $o_{ij} = o_{ji}$, $\hat{o}_{ij} = \hat{o}_{ji}$, and $o_{ii} = \hat{o}_{ii} = 1$ for all $i,j$.

Let $\mathcal{T} = \{t_1, t_2, \ldots, t_K\}$ denote the $K$ terminal nodes of E2Tree. While $O$ contains continuous values in $(0,1)$ for all pairs (reflecting varying degrees of co-occurrence across $B$ weak learners), the matrix $\hat{O}_T$ exhibits a characteristic block structure determined by this partition. Pairs within the same terminal node $t_k$ have co-occurrence values $\hat{o}_{ij} = W_{ij}^{(t_k)} \in (0,1)$, capturing how well the ensemble's predictions are represented in that region. Pairs in different terminal nodes have $\hat{o}_{ij} = 0$.

\section{The normalized LoI and the Cressie--Read Connection}
\label{sec:ngoi}

We introduce the Loss of Interpretability (LoI) and its normalized variant to quantify how faithfully an E2Tree reconstructs the proximity structure of the ensemble model, and establish a formal connection to the Cressie--Read power divergence family.

\subsection{The LoI Statistic}

Let $T$ be an E2Tree constructed from an ensemble model $\mathcal{H}$. Let $O = [o_{ij}]$ be the ensemble co-occurrence matrix defined in~\eqref{eq:rf_cooccurrence}, and let $\hat{O}_T = [\hat{o}_{ij}]$ be the E2Tree co-occurrence matrix defined in~\eqref{eq:reconstruction_logic}. Let $M = n(n-1)/2$ denote the number of unique off-diagonal pairs. The LoI of $T$ is defined as:
\begin{equation}
	\label{eq:goi_definition}
	LoI(O, \hat{O}_T) = \sum_{i < j} \frac{(o_{ij} - \hat{o}_{ij})^2}{\max(o_{ij},\, \hat{o}_{ij})},
\end{equation}
where the sum is computed over all $M$ unique pairs with $i < j$, and the convention $0/0 = 0$ is adopted for pairs where $o_{ij} = \hat{o}_{ij} = 0$.

The squared discrepancy $(o_{ij} - \hat{o}_{ij})^2$ is normalized by $\max(o_{ij}, \hat{o}_{ij})$, which serves two purposes. It renders each contribution dimensionless and scale-aware: the same absolute error of 0.1 carries more weight between values of 0.9 and 0.8 than between 0.2 and 0.1. More substantively, it concentrates the measure's sensitivity on high-proximity pairs. When $o_{ij}$ is close to 1, the denominator is large but the squared numerator dominates, so failures to reconstruct strong associations incur a substantial penalty. For near-zero co-occurrence values, both numerator and denominator are small and the contribution is naturally attenuated.

The block structure of $\hat{O}_T$ gives this weighting scheme particular relevance. Pairs assigned to different terminal nodes have $\hat{o}_{ij} = 0$ by construction, so their LoI contribution reduces to $o_{ij}^2 / o_{ij} = o_{ij}$: weakly co-occurring pairs contribute little, while pairs with high ensemble proximity that the E2Tree wrongly separates incur proportionally large penalties. The measure thus concentrates attention on the structural relationships that matter most.

\subsection{The normalized LoI}

While the raw LoI captures the total reconstruction discrepancy, its magnitude scales with $M = n(n-1)/2$, making direct comparison across datasets of different sizes impossible.

\begin{definition}[normalized LoI]
	\label{def:ngoi}
	The normalized Loss of Interpretability is defined as:
	\begin{equation}
		\label{eq:ngoi_definition}
		nLoI(O, \hat{O}_T) = \frac{1}{M} \sum_{i < j} \frac{(o_{ij} - \hat{o}_{ij})^2}{\max(o_{ij},\, \hat{o}_{ij})},
	\end{equation}
	where $M = n(n-1)/2$.
\end{definition}

The normalization by $M$ converts the LoI from a total discrepancy to an average per-pair discrepancy, enabling meaningful cross-dataset comparisons. Since each summand is bounded by 1 (see Proposition~\ref{prop:upper_bound} below), the nLoI inherits a natural scale.

\begin{proposition}[Boundedness of nLoI]
	\label{prop:ngoi_bounded}
	For co-occurrence matrices with entries in $[0,1]$:
	\begin{equation}
		0 \leq nLoI(O, \hat{O}_T) \leq 1.
	\end{equation}
\end{proposition}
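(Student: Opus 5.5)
The bound reduces to a per-pair estimate. For each unordered pair $i<j$ write the summand
\[
s_{ij} = \frac{(o_{ij}-\hat{o}_{ij})^2}{\max(o_{ij},\hat{o}_{ij})},
\]
with the convention $s_{ij}=0$ when $o_{ij}=\hat{o}_{ij}=0$. Since $nLoI$ is the arithmetic mean of the $M$ values $s_{ij}$, it is enough to show $0\le s_{ij}\le 1$ for every pair. Averaging then preserves both bounds. This per-pair bound is the one forward-referenced as Proposition~\ref{prop:upper_bound}; I would prove it directly here so the argument is self-contained.

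For the lower bound, the numerator is a square, hence nonnegative. The denominator $\max(o_{ij},\hat{o}_{ij})$ is nonnegative because both entries lie in $[0,1]$. It vanishes only when both entries are $0$, and in that case the convention gives $s_{ij}=0$. So $s_{ij}\ge 0$ in every case.

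For the upper bound, fix a pair with $m=\max(o_{ij},\hat{o}_{ij})>0$ and let $\ell=\min(o_{ij},\hat{o}_{ij})\in[0,m]$. Then
\[
s_{ij}=\frac{(m-\ell)^2}{m}.
\]
Since $0\le m-\ell\le m$, we get $(m-\ell)^2\le m(m-\ell)$, so $s_{ij}\le m-\ell\le m\le 1$. The one step that needs care is dividing by $m$, which is legitimate only because $m>0$; the degenerate case $m=0$ is exactly the one the convention covers. I do not expect any other obstacle.

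The chain $s_{ij}\le m-\ell\le m\le 1$ also shows when the upper bound is attained. Equality requires $\ell=0$ and $m=1$, meaning one entry equals $1$ and the other equals $0$. So $nLoI=1$ only if every off-diagonal pair has that form. Finally, summing the bounds $0\le s_{ij}\le 1$ over the $M$ pairs and dividing by $M$ gives $0\le nLoI(O,\hat{O}_T)\le 1$.
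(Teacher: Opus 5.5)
Your proof is correct and follows essentially the paper's argument: each summand is nonnegative (with the $0/0=0$ convention covering the degenerate case) and satisfies $s_{ij}\le\max(o_{ij},\hat{o}_{ij})\le 1$, so averaging over the $M$ pairs gives $0\le nLoI\le 1$. Your sharper intermediate bound $s_{ij}\le|o_{ij}-\hat{o}_{ij}|$ and the characterization of when $nLoI=1$ are correct refinements that go beyond what the paper states.
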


\begin{proof}
	Non-negativity follows from the non-negativity of each summand (as in the proof of Proposition~\ref{prop:nonneg}). For the upper bound, each summand satisfies $(o_{ij} - \hat{o}_{ij})^2 / \max(o_{ij}, \hat{o}_{ij}) \leq \max(o_{ij}, \hat{o}_{ij}) \leq 1$, so their average over $M$ terms is at most~1.
\end{proof}

\subsection{Theoretical Properties}

We establish fundamental properties that hold for both LoI and nLoI. The proofs are stated for the LoI; the corresponding nLoI properties follow immediately by dividing by $M > 0$.

\begin{proposition}[Non-negativity]
	\label{prop:nonneg}
	For any E2Tree $T$, the LoI satisfies:
	\begin{equation}
		\label{eq:nonneg}
		LoI(O, \hat{O}_T) \geq 0.
	\end{equation}
\end{proposition}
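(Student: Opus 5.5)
The plan is to argue term by term, since the LoI in \eqref{eq:goi_definition} is a finite sum over the $M$ unique pairs $i<j$. If every summand is non-negative, the sum is non-negative as well. So it suffices to fix an arbitrary pair $(i,j)$ and show that
\begin{equation*}
\frac{(o_{ij} - \hat{o}_{ij})^2}{\max(o_{ij},\, \hat{o}_{ij})} \geq 0 .
\end{equation*}

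I will split into two cases according to the value of the denominator.

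\emph{Case 1: $o_{ij} = \hat{o}_{ij} = 0$.} The summand is $0/0$, which equals $0$ by the convention adopted in the definition. It is therefore trivially non-negative.

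\emph{Case 2: at least one of $o_{ij}$, $\hat{o}_{ij}$ is non-zero.} Both entries lie in $[0,1]$: for $o_{ij}$ this follows from \eqref{eq:rf_cooccurrence}, and for $\hat{o}_{ij}$ from \eqref{eq:reconstruction_logic}, since $W_{ij}^{(t)}\in[0,1]$. So both are non-negative, and at least one is strictly positive. Hence $\max(o_{ij},\hat{o}_{ij})>0$. The numerator is a square of a real number and is therefore $\geq 0$. A non-negative number divided by a positive one is non-negative.

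Summing over all pairs gives $LoI(O,\hat{O}_T)\geq 0$. Dividing by $M>0$ then gives the same statement for the nLoI.

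There is no real obstacle here. The only point needing care is Case 2: the denominator can vanish only when both entries are zero, and this relies on the entries being non-negative (not merely real). If one entry could be negative, the max could be zero or negative while the numerator is positive. The non-negativity of the entries, which comes from the definitions of $O$ and $\hat{O}_T$, is exactly what rules this out.
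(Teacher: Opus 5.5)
Your proposal is correct and follows essentially the same argument as the paper: each summand is non-negative because the numerator is a square and the denominator is the maximum of two entries in $[0,1]$. That maximum is positive unless both entries vanish, and in that case the $0/0=0$ convention gives a zero contribution; your remark that non-negativity of the entries is what guarantees this makes explicit a point the paper leaves implicit.
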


\begin{proof}
	Each term in the sum~\eqref{eq:goi_definition} has the form $(o_{ij} - \hat{o}_{ij})^2 / \max(o_{ij}, \hat{o}_{ij})$. The numerator is a squared quantity, hence non-negative. The denominator is the maximum of two values in $[0,1]$; when it is positive, the ratio is non-negative. When $o_{ij} = \hat{o}_{ij} = 0$, the convention $0/0 = 0$ ensures a non-negative contribution. As a sum of non-negative terms, $LoI(O, \hat{O}_T) \geq 0$.
\end{proof}

\begin{proposition}[Identity of indiscernibles]
	\label{prop:identity}
	The LoI equals zero if and only if the E2Tree perfectly reconstructs the ensemble co-occurrence structure:
	\begin{equation}
		\label{eq:identity}
		LoI(O, \hat{O}_T) = 0 \iff \hat{o}_{ij} = o_{ij} \quad \text{for all } i < j.
	\end{equation}
\end{proposition}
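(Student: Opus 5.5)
The plan is to prove the two directions separately, using the term-by-term structure of the sum in~\eqref{eq:goi_definition} together with the non-negativity established in Proposition~\ref{prop:nonneg}.

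For the direction ``$\Leftarrow$'': assume $\hat{o}_{ij} = o_{ij}$ for every $i<j$. Each numerator $(o_{ij}-\hat{o}_{ij})^2$ then vanishes. If $\max(o_{ij},\hat{o}_{ij})>0$, the summand is $0/\max(o_{ij},\hat{o}_{ij}) = 0$. If $o_{ij}=\hat{o}_{ij}=0$, the convention $0/0=0$ applies. Every summand is therefore zero, and so is the LoI.

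For the direction ``$\Rightarrow$'': assume $LoI(O,\hat{O}_T)=0$. By the proof of Proposition~\ref{prop:nonneg}, each summand is non-negative. A finite sum of non-negative terms is zero only if every term is zero, so
\[
\frac{(o_{ij}-\hat{o}_{ij})^2}{\max(o_{ij},\hat{o}_{ij})}=0 \quad \text{for all } i<j .
\]
Fix a pair and split into two cases.
\begin{itemize}
\item If $\max(o_{ij},\hat{o}_{ij})=0$, then both entries lie in $[0,1]$ and have maximum zero, so $o_{ij}=\hat{o}_{ij}=0$.
\item If $\max(o_{ij},\hat{o}_{ij})>0$, the denominator is positive and finite, so the ratio vanishes only when the numerator does. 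Hence $(o_{ij}-\hat{o}_{ij})^2=0$, which gives $o_{ij}=\hat{o}_{ij}$.
\end{itemize}
In both cases equality holds, which proves the claim.

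The only delicate point is the degenerate case of a zero denominator, where the $0/0$ convention must be handled explicitly. One cannot simply multiply through by $\max(o_{ij},\hat{o}_{ij})$ there. It is also worth noting why the convention is harmless. A zero denominator can only arise when both entries are zero, because both are non-negative. So the convention is never invoked for a pair with a genuine discrepancy, and it cannot mask one.

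Finally, the statement only concerns pairs with $i<j$. By the symmetry and unit diagonal of both matrices noted in Section~\ref{sec:background}, this is equivalent to $\hat{O}_T = O$ as matrices. The same characterization then transfers to the nLoI by dividing by $M>0$.
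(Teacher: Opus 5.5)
Your proof is correct and takes essentially the same route as the paper. The reverse direction follows from the vanishing numerators, and you additionally make the $0/0$ convention explicit; the forward direction uses non-negativity of the summands plus the same case split on whether $\max(o_{ij},\hat{o}_{ij})$ is zero.
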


\begin{proof}
	$(\Leftarrow)$ If $\hat{o}_{ij} = o_{ij}$ for all $i < j$, then $(o_{ij} - \hat{o}_{ij})^2 = 0$ for every pair, so $LoI(O, \hat{O}_T) = 0$.

	$(\Rightarrow)$ Suppose $LoI(O, \hat{O}_T) = 0$. Since each summand $(o_{ij} - \hat{o}_{ij})^2 / \max(o_{ij}, \hat{o}_{ij}) \geq 0$ and their sum is zero, every term must vanish. For pairs where $\max(o_{ij}, \hat{o}_{ij}) > 0$, the ratio equals zero only if $(o_{ij} - \hat{o}_{ij})^2 = 0$, i.e., $o_{ij} = \hat{o}_{ij}$. For pairs where $\max(o_{ij}, \hat{o}_{ij}) = 0$, we necessarily have $o_{ij} = \hat{o}_{ij} = 0$. In both cases, $\hat{o}_{ij} = o_{ij}$.
\end{proof}

\begin{proposition}[Symmetry]
	\label{prop:symmetry}
	The LoI is symmetric with respect to its arguments:
	\begin{equation}
		\label{eq:symmetry}
		LoI(O, \hat{O}_T) = LoI(\hat{O}_T, O).
	\end{equation}
\end{proposition}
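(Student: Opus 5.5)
The plan is a direct termwise comparison: show that each summand of~\eqref{eq:goi_definition} is unchanged when the roles of $o_{ij}$ and $\hat{o}_{ij}$ are exchanged. Then the two sums agree term by term, and the claim follows. Since both sums run over the same index set $\{(i,j): i<j\}$ of $M$ pairs, no reindexing is needed.

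Fix a pair $i<j$ and write $a = o_{ij}$, $b = \hat{o}_{ij}$, both in $[0,1]$. The summand of $LoI(O,\hat{O}_T)$ is $f(a,b) = (a-b)^2/\max(a,b)$, and the corresponding summand of $LoI(\hat{O}_T,O)$ is $f(b,a) = (b-a)^2/\max(b,a)$. I would check two elementary identities:
\begin{equation*}
(a-b)^2 = (b-a)^2, \qquad \max(a,b) = \max(b,a).
\end{equation*}
Whenever $\max(a,b) > 0$, these give $f(a,b) = f(b,a)$ immediately.

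The only point needing care is the degenerate case $\max(a,b) = 0$, that is, $a=b=0$. Here both summands are governed by the convention $0/0=0$ adopted after~\eqref{eq:goi_definition}. This convention depends only on the unordered pair of values being zero, so it applies identically under either argument order, and both contributions equal $0$.

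Summing over all $i<j$ then yields $LoI(O,\hat{O}_T) = LoI(\hat{O}_T,O)$. Dividing by $M>0$ gives the same symmetry for the nLoI in Definition~\ref{def:ngoi}. I expect no real obstacle; the only step requiring attention is the explicit treatment of the $0/0$ case, so that the convention is seen to be order-independent.
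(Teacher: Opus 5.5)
Your proof is correct and matches the paper's argument: both reduce the claim to the summandwise identities $(a-b)^2=(b-a)^2$ and $\max(a,b)=\max(b,a)$, then sum over the common index set $\{(i,j): i<j\}$. Your explicit check that the $0/0=0$ convention is order-independent is a small detail the paper leaves implicit.
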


\begin{proof}
	For each pair $(i,j)$, we have $(o_{ij} - \hat{o}_{ij})^2 = (\hat{o}_{ij} - o_{ij})^2$ and $\max(o_{ij}, \hat{o}_{ij}) = \max(\hat{o}_{ij}, o_{ij})$. Hence each summand is invariant to the interchange of $O$ and $\hat{O}_T$, and so is the sum.
\end{proof}

\begin{proposition}[Upper bound]
	\label{prop:upper_bound}
	For co-occurrence matrices with entries in $[0,1]$, the LoI is bounded above:
	\begin{equation}
		\label{eq:upper_bound}
		LoI(O, \hat{O}_T) \leq M,
	\end{equation}
	where $M = n(n-1)/2$.
\end{proposition}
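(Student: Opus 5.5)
The plan is to bound each summand of \eqref{eq:goi_definition} by $1$ and then sum over the $M$ unique pairs $i<j$. Fix a pair $(i,j)$ and set $a = o_{ij}$, $b = \hat{o}_{ij}$, both in $[0,1]$, with $m = \max(a,b)$. The argument splits into two cases according to whether $m$ vanishes.

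\emph{Case $m = 0$.} Then $a = b = 0$, and the convention $0/0 = 0$ makes the summand equal to $0 \le 1$.

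\emph{Case $m > 0$.} Since $a$ and $b$ both lie in $[0,m]$, their difference satisfies $|a-b| \le m$. This is the one substantive inequality: it holds because $|a-b| = \max(a,b) - \min(a,b)$ and $\min(a,b) \ge 0$. Squaring gives $(a-b)^2 \le m^2$. Dividing by $m > 0$ then yields
\begin{equation*}
\frac{(a-b)^2}{\max(a,b)} \le \max(a,b) \le 1,
\end{equation*}
where the last step uses that the entries lie in $[0,1]$. This is exactly the per-summand bound already invoked in the proof of Proposition~\ref{prop:ngoi_bounded}.

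Summing over the $M = n(n-1)/2$ pairs gives $LoI(O,\hat{O}_T) \le \sum_{i<j} 1 = M$. Dividing by $M$ recovers the upper half of Proposition~\ref{prop:ngoi_bounded}, which is consistent with how that result was stated.

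None of the steps poses a real difficulty. The only point needing care is nonnegativity of the entries, which is what makes $|a-b| \le \max(a,b)$ valid; the degenerate $0/0$ case is handled by the convention.

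To add a remark on sharpness, I would note that the bound is attained when $o_{ij} = 1$ and $\hat{o}_{ij} = 0$ for every pair. This configuration is admissible for generic $[0,1]$ matrices, though not necessarily for E2Tree outputs.
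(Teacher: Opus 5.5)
Your proof is correct and matches the paper's argument: bound each summand by $\max(o_{ij},\hat{o}_{ij})\le 1$ using $|o_{ij}-\hat{o}_{ij}|\le\max(o_{ij},\hat{o}_{ij})$, then sum over the $M$ pairs. Your explicit handling of the $0/0$ case and your sharpness remark are minor additions that the paper leaves implicit.
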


\begin{proof}
	Each summand satisfies:
	\begin{equation}
		\frac{(o_{ij} - \hat{o}_{ij})^2}{\max(o_{ij}, \hat{o}_{ij})} \leq \frac{\max(o_{ij}, \hat{o}_{ij})^2}{\max(o_{ij}, \hat{o}_{ij})} = \max(o_{ij}, \hat{o}_{ij}) \leq 1,
	\end{equation}
	where the first inequality follows from $|o_{ij} - \hat{o}_{ij}| \leq \max(o_{ij}, \hat{o}_{ij})$ when both values lie in $[0,1]$. Summing over all $M$ pairs yields $LoI(O, \hat{O}_T) \leq M$.
\end{proof}

\begin{proposition}[Permutation invariance]
	\label{prop:permutation_invariance}
	The LoI is invariant under simultaneous reordering of observations. For any permutation $\pi$ of $\{1, \ldots, n\}$:
	\begin{equation}
		\label{eq:permutation_invariance}
		LoI(O^\pi, \hat{O}_T^\pi) = LoI(O, \hat{O}_T),
	\end{equation}
	where $O^\pi = [o_{\pi(i)\pi(j)}]$ and $\hat{O}_T^\pi = [\hat{o}_{\pi(i)\pi(j)}]$ denote the matrices with rows and columns simultaneously permuted according to $\pi$.
\end{proposition}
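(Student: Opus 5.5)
The plan is to express the LoI as a sum of a fixed pairwise function over unordered pairs, and then show that simultaneous permutation merely reindexes that sum. Define $f:[0,1]^2\to[0,\infty)$ by $f(a,b)=(a-b)^2/\max(a,b)$, with $f(0,0)=0$ by the paper's $0/0$ convention. Then
\begin{equation*}
LoI(O,\hat{O}_T)=\sum_{i<j} f(o_{ij},\hat{o}_{ij}).
\end{equation*}
By Proposition~\ref{prop:symmetry}, or directly from the definition, $f(a,b)=f(b,a)$. The key structural fact is that each summand depends on the pair $(i,j)$ only through the two entries at the same position $(i,j)$ of the two matrices.

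Next I write the permuted statistic explicitly:
\begin{equation*}
LoI(O^\pi,\hat{O}_T^\pi)=\sum_{i<j} f\bigl(o_{\pi(i)\pi(j)},\hat{o}_{\pi(i)\pi(j)}\bigr).
\end{equation*}
I then reindex via the map $\phi$ sending the unordered pair $\{i,j\}$ to $\{\pi(i),\pi(j)\}$. Since $\pi$ is a bijection of $\{1,\ldots,n\}$ and $i\neq j$ implies $\pi(i)\neq\pi(j)$, $\phi$ is a bijection on the set of $M$ unordered off-diagonal pairs. Its inverse is induced by $\pi^{-1}$.

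The one point needing care, and the only genuine obstacle, is that the sum runs over ordered indices $i<j$, while $\pi(i)$ may exceed $\pi(j)$. This is handled by the symmetry of both matrices, $o_{kl}=o_{lk}$ and $\hat{o}_{kl}=\hat{o}_{lk}$, stated in the Background section. Because of it, $f(o_{kl},\hat{o}_{kl})$ depends only on the unordered pair $\{k,l\}$, so I may write it as $g(\{k,l\})$. Then
\begin{equation*}
\sum_{i<j} g\bigl(\phi(\{i,j\})\bigr)=\sum_{\{k,l\}} g(\{k,l\})=LoI(O,\hat{O}_T),
\end{equation*}
by the bijectivity of $\phi$. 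Note that only the simultaneous application of $\pi$ to both matrices is used: both entries of each summand move together. Finally, dividing by $M$, which is unchanged by $\pi$, gives the same invariance for the nLoI.
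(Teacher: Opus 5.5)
Your proof is correct and takes the same route as the paper: simultaneous permutation induces a bijection on the $M$ off-diagonal pairs, so the multiset of summands, and hence the sum, is unchanged. You are more careful than the paper, which relies without comment on the symmetry $o_{kl}=o_{lk}$, $\hat{o}_{kl}=\hat{o}_{lk}$ to handle pairs with $\pi(i)>\pi(j)$; the symmetry $f(a,b)=f(b,a)$ that you also cite is never actually used.
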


\begin{proof}
	Under simultaneous permutation, the contribution of the permuted pair $(\pi(i), \pi(j))$ is
		\begin{equation}
	\frac{(o_{\pi(i)\pi(j)} - \hat{o}_{\pi(i)\pi(j)})^2}{\max(o_{\pi(i)\pi(j)}, \hat{o}_{\pi(i)\pi(j)})}
		\end{equation}
 	Since $\pi$ is a bijection, the multiset of summands is identical to the unpermuted case; only the indexing changes. As the LoI is a sum over this multiset, it remains unchanged.
\end{proof}

\begin{proposition}[Decomposability]
	\label{prop:decomposability}
	Let $\mathcal{P}_\text{in} = \{(i,j) : i < j,\, i \text{ and } j \text{ in same terminal node}\}$ and $\mathcal{P}_\text{out} = \{(i,j) : i < j,\, i \text{ and } j \text{ in different terminal nodes}\}$. Then:
	\begin{equation}
		\label{eq:decomposability}
		LoI(O, \hat{O}_T) = \underbrace{\sum_{(i,j) \in \mathcal{P}_\text{in}} \frac{(o_{ij} - \hat{o}_{ij})^2}{\max(o_{ij}, \hat{o}_{ij})}}_{LoI_\text{in}} \;+\; \underbrace{\sum_{(i,j) \in \mathcal{P}_\text{out}} o_{ij}}_{LoI_\text{out}}.
	\end{equation}
\end{proposition}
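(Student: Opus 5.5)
The plan is to split the defining sum~\eqref{eq:goi_definition} along the partition of pairs induced by the terminal nodes, and then simplify the between-node part using the block structure of $\hat{O}_T$.

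First, since every pair $(i,j)$ with $i<j$ either has both observations in the same terminal node of $T$ or has them in different ones, and since $\mathcal{T}$ is a partition of the observations, the sets $\mathcal{P}_\text{in}$ and $\mathcal{P}_\text{out}$ are disjoint and their union is the full index set of the $M$ pairs. The LoI is a finite sum of non-negative summands (Proposition~\ref{prop:nonneg}), so it splits additively as a sum over $\mathcal{P}_\text{in}$ plus a sum over $\mathcal{P}_\text{out}$. The first piece is, verbatim, $LoI_\text{in}$, and nothing further is needed there.

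Second, for $(i,j)\in\mathcal{P}_\text{out}$, definition~\eqref{eq:reconstruction_logic} gives $\hat{o}_{ij}=0$. Hence $\max(o_{ij},\hat{o}_{ij})=o_{ij}$ and the summand becomes $o_{ij}^2/o_{ij}$. I will treat two cases:
\begin{itemize}
\item if $o_{ij}>0$, the summand equals $o_{ij}$;
\item if $o_{ij}=0$, the summand is $0/0$, which equals $0=o_{ij}$ by the convention adopted with~\eqref{eq:goi_definition}.
\end{itemize}
So in every case the between-node summand equals $o_{ij}$, and summing over $\mathcal{P}_\text{out}$ yields $LoI_\text{out}$.

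The only delicate point is this degenerate case $o_{ij}=\hat{o}_{ij}=0$, which the $0/0$ convention resolves directly. A related subtlety is that within a node $\hat{o}_{ij}=W^{(t)}_{ij}$ could in principle be $0$. This does not affect the argument, because $LoI_\text{in}$ is kept in its general form and the same convention covers any such term. Combining the two pieces gives~\eqref{eq:decomposability}. The nLoI version follows by dividing both sides by $M$.
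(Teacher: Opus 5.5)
Your proof is correct and follows the paper's argument essentially verbatim. Like the paper, you partition the pairs into $\mathcal{P}_\text{in}$ and $\mathcal{P}_\text{out}$, keep the within-node terms unchanged, and use $\hat{o}_{ij}=0$ on between-node pairs so that each summand reduces to $o_{ij}$, with the $0/0$ convention covering $o_{ij}=0$; one small point is that the appeal to non-negativity (Proposition~\ref{prop:nonneg}) is unnecessary, since any finite sum splits additively over a partition of its index set.
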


\begin{proof}
	For pairs in $\mathcal{P}_\text{out}$, we have $\hat{o}_{ij} = 0$ by the definition of the E2Tree co-occurrence matrix~\eqref{eq:reconstruction_logic}. Therefore:
	\begin{equation}
		\frac{(o_{ij} - 0)^2}{\max(o_{ij}, 0)} = \frac{o_{ij}^2}{o_{ij}} = o_{ij},
	\end{equation}
	for all pairs where $o_{ij} > 0$. When $o_{ij} = 0$, the contribution is zero by convention. For pairs in $\mathcal{P}_\text{in}$, the general formula applies directly. The total LoI is the sum of contributions from both partition classes.
\end{proof}

\begin{remark}[Diagnostic interpretation of the decomposition]
	\label{rem:decomposability}
	Proposition~\ref{prop:decomposability} provides a structural decomposition of the LoI into two interpretable components that constitutes a unique diagnostic capability of the LoI, one that no correlation-based measure (including the Mantel test) can provide.

	The within-node component $LoI_\text{in}$ captures the fidelity of the reconstruction for pairs that E2Tree groups together: it measures whether the single tree assigns co-occurrence values that match the ensemble's.

	The between-node component $LoI_\text{out}$ captures the cost of the partition: it accumulates the ensemble co-occurrence values of pairs that E2Tree separates into different terminal nodes. Since separated pairs receive $\hat{o}_{ij} = 0$, the entire ensemble co-occurrence $o_{ij}$ for those pairs becomes an irrecoverable loss.

	A critical subtlety arises when interpreting the raw totals $LoI_\text{in}$ and $LoI_\text{out}$: they are \emph{not directly comparable} because the two sets of pairs have vastly different cardinalities. For an E2Tree with $K$ terminal nodes of roughly equal size $n/K$, the number of between-node pairs scales as $M(1 - 1/K)$ while within-node pairs scale as $M/K$, so between-node pairs can outnumber within-node pairs by a factor of $K-1$. Consequently, $LoI_\text{out}$ will typically dominate the raw total even when the partition is well-placed, simply because it accumulates the ``fuzzy background'' of the ensemble proximity matrix across many more pairs.

	To obtain a meaningful diagnostic, we define the per-pair averages:
	\begin{equation}
		\label{eq:mean_in_out}
		\bar{\ell}_\text{in} = \frac{LoI_\text{in}}{|\mathcal{P}_\text{in}|}, \qquad \bar{\ell}_\text{out} = \frac{LoI_\text{out}}{|\mathcal{P}_\text{out}|}.
	\end{equation}
	These averages are directly comparable regardless of the partition structure. The quantity $\bar{\ell}_\text{in}$ represents the average calibration error for pairs that E2Tree groups together, while $\bar{\ell}_\text{out}$ represents the average ensemble proximity lost for each pair that E2Tree separates.

	This decomposition enables targeted model improvement:
	\begin{itemize}
		\item A high $\bar{\ell}_\text{out}$ (e.g., $> 0.3$) indicates that the E2Tree is separating pairs with substantial ensemble proximity. The practitioner should consider increasing the tree's complexity (more terminal nodes) or relaxing pruning constraints.
		\item A high $\bar{\ell}_\text{in}$ (e.g., $> 0.1$) indicates poor within-node calibration: the partition boundaries are well-placed, but the within-node proximity values deviate from the ensemble's. This typically reflects the inherent fuzzy-to-crisp structural transition.
		\item Low values of both $\bar{\ell}_\text{in}$ and $\bar{\ell}_\text{out}$ confirm that the E2Tree faithfully captures the ensemble structure: the partition is well-placed and the calibration is accurate.
	\end{itemize}

	Such directional diagnostics are fundamentally impossible with the Mantel test or any other scalar association measure, which can only report that the overall reconstruction is ``good'' or ``poor'' without indicating the source of the discrepancy.
\end{remark}

\subsection{Connection to the Cressie--Read Power Divergence Family}

The LoI can be formally situated within the family of power divergence statistics introduced by \citet{Cressie1984}, which has since been extended to non-Gaussian vector stationary processes \citep{OgataTaniguchi2009} and to local Whittle likelihood frameworks for generalized divergences \citep{XueTaniguchi2020}. The Cressie--Read power divergence between two discrete distributions $p$ and $q$ is defined as:
\begin{equation}
	\label{eq:cressie_read}
	I^\lambda(p, q) = \frac{1}{\lambda(\lambda + 1)} \sum_k p_k \left[ \left(\frac{p_k}{q_k}\right)^\lambda - 1 \right], \quad \lambda \in \mathbb{R},
\end{equation}
with limiting cases defined by continuity. Key members of this family include: $\lambda = 1$ (Pearson's $\chi^2$), $\lambda = 0$ (log-likelihood ratio $G^2$), $\lambda = -1/2$ (Freeman--Tukey statistic), and $\lambda = -2$ (Neyman's $\chi^2_N$).

The Neyman chi-squared statistic ($\lambda = -2$) takes the form:
\begin{equation}
	\label{eq:neyman}
	\chi^2_N = \sum_k \frac{(p_k - q_k)^2}{q_k}.
\end{equation}

Comparing equations~\eqref{eq:goi_definition} and~\eqref{eq:neyman}, we observe that the LoI uses $\max(o_{ij}, \hat{o}_{ij})$ in place of $\hat{o}_{ij}$ (or $o_{ij}$) in the denominator. This substitution represents a \emph{robustified Neyman-type statistic} with three important consequences:

\begin{enumerate}[label=(\roman*)]
	\item \textbf{Singularity avoidance.} The classical Neyman statistic is undefined when $q_k = 0$. In the E2Tree context, $\hat{o}_{ij} = 0$ is the norm rather than the exception: the block-diagonal structure of $\hat{O}_T$ forces all between-node pairs to have zero co-occurrence. Using $\max(o_{ij}, \hat{o}_{ij})$ ensures a well-defined denominator whenever at least one matrix assigns positive proximity.

	\item \textbf{Symmetry.} Unlike the classical Neyman statistic, which treats $p$ and $q$ asymmetrically, the LoI is symmetric in $O$ and $\hat{O}_T$ (demonstrated earlier in Proposition~\ref{prop:symmetry}). This property is desirable because neither matrix has a privileged role as the ``expected'' distribution.

	\item \textbf{Weighting interpretation.} Each pair's contribution is normalized by the larger of the two co-occurrence values, which ensures that discrepancies are measured relative to the ``importance'' of the pair. This preserves the Neyman weighting intuition (errors normalized by magnitude) while avoiding the extreme sensitivity to small denominators that plagues the classical formulation.
\end{enumerate}

\begin{proposition}[Relationship to Neyman $\chi^2$]
	\label{prop:cressie_read}
	For all pairs $(i,j)$ with $i < j$:
	\begin{equation}
		\frac{(o_{ij} - \hat{o}_{ij})^2}{\max(o_{ij}, \hat{o}_{ij})} \leq \frac{(o_{ij} - \hat{o}_{ij})^2}{\min(o_{ij}, \hat{o}_{ij})}
	\end{equation}
	whenever $\min(o_{ij}, \hat{o}_{ij}) > 0$. Equality holds if and only if $o_{ij} = \hat{o}_{ij}$.
\end{proposition}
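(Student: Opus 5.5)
Fix a pair $(i,j)$ with $i<j$ and assume $\min(o_{ij},\hat{o}_{ij})>0$. Write $a=o_{ij}$, $b=\hat{o}_{ij}$, $m=\min(a,b)$, $M_{ij}=\max(a,b)$, and $\delta=(a-b)^2\ge 0$. Then $0<m\le M_{ij}$, so both sides of the inequality are well defined. No convention $0/0=0$ is needed here, which is exactly why the hypothesis $\min>0$ is imposed.

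The inequality is a monotonicity statement for $x\mapsto \delta/x$ on $(0,\infty)$. Since $m\le M_{ij}$ and both are positive, we have $1/M_{ij}\le 1/m$. Multiplying by $\delta\ge 0$ preserves the inequality, which gives $\delta/M_{ij}\le \delta/m$. This is the claim. The right-hand side is the Neyman-type summand from~\eqref{eq:neyman} with the smaller value in the denominator, so the robustified LoI summand is dominated pairwise by it. Summing over pairs then gives the obvious global comparison, though the statement only asks for the pairwise bound.

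For the equality characterization, I will compute the difference directly:
\begin{equation*}
\frac{\delta}{m}-\frac{\delta}{M_{ij}}=\delta\,\frac{M_{ij}-m}{m\,M_{ij}}=\frac{|a-b|^{2}\,|a-b|}{m\,M_{ij}}=\frac{|a-b|^{3}}{m\,M_{ij}},
\end{equation*}
using $M_{ij}-m=|a-b|$. The denominator is strictly positive, so the difference vanishes if and only if $|a-b|=0$, that is, $o_{ij}=\hat{o}_{ij}$. This gives both directions of the "if and only if" at once, and it also re-proves the inequality, since the difference is nonnegative.

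The only delicate point is the equality case. A naive argument, "equality iff $m=M_{ij}$ or $\delta=0$", has to notice that both conditions reduce to $a=b$. The factorization $|a-b|^3/(m M_{ij})$ makes this automatic, so I expect no real obstacle beyond stating that the hypothesis $\min>0$ keeps every denominator nonzero.
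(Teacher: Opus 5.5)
Your proof is correct and follows the paper's route: the inequality is the monotone decrease of $x \mapsto c/x$ on $(0,\infty)$, applied with $\min(o_{ij},\hat{o}_{ij}) \le \max(o_{ij},\hat{o}_{ij})$. Your explicit difference $\delta/m - \delta/M_{ij} = |a-b|^3/(m\,M_{ij})$ is a worthwhile addition, because the paper's one-line proof never justifies the ``equality iff $o_{ij}=\hat{o}_{ij}$'' clause, and your factorization settles it directly under the hypothesis $\min(o_{ij},\hat{o}_{ij})>0$.
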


\begin{proof}
	Since $\max(a,b) \geq \min(a,b)$ for all $a,b \geq 0$, the result follows from the monotone decreasing property of $f(x) = c/x$ for $c \geq 0$ and $x > 0$.
\end{proof}

This proposition shows that the LoI provides a \emph{conservative} (lower) estimate compared to a Neyman-type statistic using $\min(o_{ij}, \hat{o}_{ij})$ in the denominator, which would be unstable when either entry is near zero.

The divergence-based perspective also connects to the theory of proper scoring rules \citep{Dawid2016}, in which objective functions of divergence type are used as alternatives to the log-likelihood. In that framework, each member of the Cressie--Read family corresponds to a specific scoring rule, providing an additional theoretical underpinning for the $\lambda = -2$ specialization adopted here.

\section{A Family of Alternative Divergence Measures}
\label{sec:alternatives}

No single measure captures every facet of proximity matrix agreement. We introduce four complementary statistics, each with distinct theoretical properties, that address aspects of reconstruction quality the nLoI does not directly target.

\subsection{Hellinger Distance}
\label{sec:hellinger}

The Hellinger distance \citep{Hellinger1909} between the lower-triangular elements of $O$ and $\hat{O}_T$ is defined as:
\begin{equation}
	\label{eq:hellinger}
	H(O, \hat{O}_T) = \sqrt{\frac{1}{M} \sum_{i < j} \left(\sqrt{o_{ij}} - \sqrt{\hat{o}_{ij}}\right)^2}.
\end{equation}

The square-root transformation $\sqrt{\cdot}$ that defines the Hellinger distance has deep connections to variance stabilization \citep{Freeman1950}. For count data following a Poisson distribution with mean $\mu$, the variance-stabilizing transformation is approximately $2\sqrt{X}$, yielding $Var(2\sqrt{X}) \approx 1$ independently of $\mu$. This property extends to our setting: the square-root transformation equalizes the contribution of each pair regardless of its magnitude, preventing high-proximity pairs from dominating the measure.

\begin{proposition}[Properties of $H$]
	\label{prop:hellinger_properties}
	The Hellinger distance satisfies:
	\begin{enumerate}[label=(\roman*)]
		\item $H(O, \hat{O}_T) \geq 0$, with equality if and only if $o_{ij} = \hat{o}_{ij}$ for all $i < j$.
		\item $H(O, \hat{O}_T) = H(\hat{O}_T, O)$ (symmetry).
		\item $0 \leq H(O, \hat{O}_T) \leq 1$ when $o_{ij}, \hat{o}_{ij} \in [0,1]$.
		\item $H$ satisfies the triangle inequality and is therefore a metric on the space of proximity matrices.
	\end{enumerate}
\end{proposition}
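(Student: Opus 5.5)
The plan is to view $H$ as a rescaled Euclidean norm. Define the map $\phi$ that sends a symmetric proximity matrix $A$ with entries in $[0,1]$ to the vector $\phi(A) = (\sqrt{a_{ij}})_{i<j} \in [0,1]^M$. Then
\begin{equation*}
H(O,\hat{O}_T) = \frac{1}{\sqrt{M}}\,\bigl\|\phi(O) - \phi(\hat{O}_T)\bigr\|_2 .
\end{equation*}
Once this identity is written down, all four claims reduce to standard facts about the Euclidean norm on $\mathbb{R}^M$. We also use that $\phi$ is injective on matrices with nonnegative entries, since $x \mapsto \sqrt{x}$ is strictly increasing on $[0,\infty)$.

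For (i), $H$ is the square root of an average of squares, so $H \geq 0$. Equality holds exactly when every summand vanishes, that is, when $\sqrt{o_{ij}} = \sqrt{\hat{o}_{ij}}$ for all $i<j$. By injectivity of the square root on $[0,1]$, this is equivalent to $o_{ij} = \hat{o}_{ij}$ for all $i<j$. The argument mirrors Proposition~\ref{prop:identity}. For (ii), each summand $(\sqrt{o_{ij}}-\sqrt{\hat{o}_{ij}})^2$ is invariant under swapping the two arguments, exactly as in Proposition~\ref{prop:symmetry}. For (iii), since both square roots lie in $[0,1]$, we have $|\sqrt{o_{ij}}-\sqrt{\hat{o}_{ij}}| \leq \max(\sqrt{o_{ij}},\sqrt{\hat{o}_{ij}}) \leq 1$, which is the same device used in Proposition~\ref{prop:upper_bound}. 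Each squared summand is therefore at most $1$, their average is at most $1$, and the square root preserves the bound.

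For (iv), the triangle inequality follows from the Minkowski inequality for the Euclidean norm applied to the vectors $\phi(O)$, $\phi(C)$ and $\phi(\hat{O}_T)$ for any third proximity matrix $C$:
\begin{equation*}
\|\phi(O)-\phi(\hat{O}_T)\|_2 \leq \|\phi(O)-\phi(C)\|_2 + \|\phi(C)-\phi(\hat{O}_T)\|_2 .
\end{equation*}
Dividing by $\sqrt{M}$ gives $H(O,\hat{O}_T) \leq H(O,C) + H(C,\hat{O}_T)$. Combined with (i) and (ii), this shows $H$ is a metric, being the pullback of a norm metric under an injective map.

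The main obstacle is only a matter of precision in (iv): being explicit about the space on which $H$ is a metric. The relevant space is symmetric matrices with entries in $[0,1]$, and identification is made through their off-diagonal entries. The diagonal is fixed at $1$ and the lower triangle is determined by symmetry, so matrices agreeing above the diagonal coincide. With this identification, injectivity of $\phi$ holds and $H$ is a genuine metric rather than a pseudometric.
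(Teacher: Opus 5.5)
Your proposal is correct and follows the paper's route. The paper likewise obtains (iv) by identifying $H$ with the Euclidean distance between the rescaled square-root vectors $(\sqrt{o_{ij}})_{i<j}/\sqrt{M}$ and $(\sqrt{\hat{o}_{ij}})_{i<j}/\sqrt{M}$, and gets (iii) from the equivalent bound $(\sqrt{a}-\sqrt{b})^2 \leq \max(a,b) \leq 1$; your remark that $H$ is a genuine metric, rather than a pseudometric, only on symmetric unit-diagonal matrices is a useful precision the paper leaves implicit.
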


\begin{proof}
	Properties (i)--(iii) follow directly from the definition. For (iii), note that $(\sqrt{a} - \sqrt{b})^2 \leq \max(a,b) \leq 1$ for $a,b \in [0,1]$, so $H^2 \leq 1$. Property (iv) follows from the fact that $H$ is the $L^2$ distance between $(\sqrt{o_{ij}})_{i<j}/\sqrt{M}$ and $(\sqrt{\hat{o}_{ij}})_{i<j}/\sqrt{M}$.
\end{proof}

  The Hellinger distance is connected to the Freeman-Tukey statistic \citep{Freeman1950}: $\mathrm{FT} = 4M \cdot H^2$,  corresponding to the Cressie-Read family at $\lambda = -1/2$.
The properties and applications of this statistic in multivariate settings, including its role in correspondence analysis, are examined in \citet{Beh2018}.

A key practical advantage of the Hellinger distance is its robustness to sparsity. Since $\sqrt{0} = 0$ is well-defined, no special handling of zero entries is required. Moreover, the absence of a denominator that can approach zero eliminates the numerical instabilities that affect ratio-based measures like nLoI when proximity values are small. This makes the Hellinger distance particularly suitable for evaluating E2Tree reconstructions with many terminal nodes, where $\hat{O}_T$ is highly sparse.

\subsection{Weighted Root Mean Squared Error}
\label{sec:wrmse}

The weighted RMSE shares the same weighting philosophy as the nLoI but expresses the result in a more interpretable RMSE scale:
\begin{equation}
	\label{eq:wrmse}
	\mathrm{wRMSE}(O, \hat{O}_T) = \sqrt{\frac{\sum_{i<j} w_{ij} \cdot (o_{ij} - \hat{o}_{ij})^2}{\sum_{i<j} w_{ij}}},
\end{equation}
where $w_{ij} = \max(o_{ij}, \hat{o}_{ij}, \varepsilon)$ and $\varepsilon > 0$ is a small constant for numerical stability.

\begin{proposition}[Properties of wRMSE]
	\label{prop:wrmse_properties}
	The wRMSE satisfies:
	\begin{enumerate}[label=(\roman*)]
		\item $\mathrm{wRMSE} \geq 0$, with equality if and only if $o_{ij} = \hat{o}_{ij}$ for all $i < j$.
		\item $0 \leq \mathrm{wRMSE} \leq 1$ when $o_{ij}, \hat{o}_{ij} \in [0,1]$.
		\item $\mathrm{wRMSE}$ is symmetric in $O$ and $\hat{O}_T$.
	\end{enumerate}
\end{proposition}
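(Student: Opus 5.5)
The plan is to treat wRMSE as the square root of a weighted average of the squared discrepancies $e_{ij} = (o_{ij}-\hat{o}_{ij})^2$. The normalized weights are $\alpha_{ij} = w_{ij}/\sum_{k<l} w_{kl}$. Everything then rests on one observation: every weight is strictly positive, because $w_{ij} = \max(o_{ij},\hat{o}_{ij},\varepsilon) \geq \varepsilon > 0$. Hence the denominator $\sum_{i<j} w_{ij} \geq M\varepsilon > 0$, so the statistic is always well defined and no $0/0$ convention is needed, unlike for the LoI. This positivity is exactly what the ``only if'' direction of (i) requires, and it is the one point to state carefully. Without $\varepsilon$, pairs with $o_{ij}=\hat{o}_{ij}=0$ would receive zero weight; that would be harmless here, but the $\varepsilon$ makes the argument uniform.

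For (i), each $\alpha_{ij}e_{ij}\geq 0$, so the radicand is non-negative and the square root is real and non-negative. If all $o_{ij}=\hat{o}_{ij}$, every $e_{ij}=0$ and wRMSE $=0$. Conversely, if wRMSE $=0$, the radicand is a sum of non-negative terms equal to zero, so each $w_{ij}e_{ij}=0$. Since $w_{ij}\geq\varepsilon>0$, each $e_{ij}=0$, and hence $o_{ij}=\hat{o}_{ij}$ for all $i<j$. This mirrors the argument of Proposition~\ref{prop:identity}.

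For (ii), the lower bound is contained in (i). For the upper bound, I will use the same elementary fact as in Proposition~\ref{prop:upper_bound}: for $a,b\in[0,1]$ we have $|a-b|\leq\max(a,b)\leq 1$, so $e_{ij}\leq 1$. Since the $\alpha_{ij}$ are non-negative and sum to one, the radicand is a convex combination of numbers in $[0,1]$ and therefore lies in $[0,1]$. Taking the square root preserves this interval.

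For (iii), note that $e_{ij}$ is invariant under swapping $o_{ij}$ and $\hat{o}_{ij}$, and so is $w_{ij}=\max(o_{ij},\hat{o}_{ij},\varepsilon)$, because $\max$ is symmetric in its arguments. Both the numerator and the denominator are therefore unchanged, as in Proposition~\ref{prop:symmetry}. None of these steps is a real obstacle; the only subtlety is invoking $\varepsilon>0$ for well-definedness and for the converse in (i).
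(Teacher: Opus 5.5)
Your proof is correct and follows essentially the same route as the paper. The paper treats (i) and (iii) as immediate and proves (ii) by bounding the weighted average of $(o_{ij}-\hat{o}_{ij})^2 \leq 1$; you simply spell out that $w_{ij}\geq\varepsilon>0$ drives the converse in (i), and your convex-combination argument for (ii) avoids the paper's extra claim $w_{ij}\leq 1$, which tacitly requires $\varepsilon\leq 1$.
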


\begin{proof}
	(i) and (iii) are immediate. For (ii), note that $|o_{ij} - \hat{o}_{ij}| \leq 1$ implies $(o_{ij} - \hat{o}_{ij})^2 \leq 1$, and $w_{ij} \leq 1$ with $\sum w_{ij} > 0$, so the weighted average is at most 1.
\end{proof}

The wRMSE can be interpreted as a ``weighted average discrepancy'' where the weights emphasize high-proximity regions. Unlike the nLoI, which divides each squared discrepancy by the weight, the wRMSE multiplies by the weight. This distinction means that the wRMSE treats the weights as importance indicators (high-proximity pairs contribute more to the average) rather than as normalizers (high-proximity pairs have their discrepancies attenuated).

\subsection{RV Coefficient}
\label{sec:rv}

The RV coefficient \citep{Robert1976} provides a matrix-level analog of Pearson's correlation coefficient:
\begin{equation}
	\label{eq:rv}
	\mathrm{RV}(O, \hat{O}_T) = \frac{\tr(O_c \cdot \hat{O}_c)}{\sqrt{\tr(O_c^2) \cdot \tr(\hat{O}_c^2)}},
\end{equation}
where $O_c$ and $\hat{O}_c$ denote the matrices with diagonals set to zero (since $o_{ii} = \hat{o}_{ii} = 1$ for all $i$, the diagonal carries no information about reconstruction quality).

The trace inner product $\tr(A \cdot B) = \sum_{ij} a_{ij} b_{ij}$ is the Frobenius inner product, and the RV coefficient is the cosine of the angle between $O_c$ and $\hat{O}_c$ in Frobenius space.

\begin{proposition}[Properties of RV]
	\label{prop:rv_properties}
	For symmetric non-negative matrices:
	\begin{enumerate}[label=(\roman*)]
		\item $0 \leq \mathrm{RV}(O, \hat{O}_T) \leq 1$.
		\item $\mathrm{RV} = 1$ if and only if $O_c = c \cdot \hat{O}_c$ for some scalar $c > 0$.
		\item $\mathrm{RV} = 0$ if and only if $\tr(O_c \cdot \hat{O}_c) = 0$, i.e., the matrices are orthogonal in the Frobenius inner product.
		\item $\mathrm{RV}$ is invariant to positive scaling of either matrix.
	\end{enumerate}
\end{proposition}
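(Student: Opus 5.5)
The plan is to reduce everything to the Frobenius inner product $\langle A,B\rangle_F=\tr(A B)=\sum_{i,j}a_{ij}b_{ij}$, valid here because $O_c$ and $\hat{O}_c$ are symmetric. In this notation
\[
\mathrm{RV}(O,\hat{O}_T)=\frac{\langle O_c,\hat{O}_c\rangle_F}{\|O_c\|_F\,\|\hat{O}_c\|_F}.
\]
Throughout I will assume $O_c\neq 0$ and $\hat{O}_c\neq 0$, so that the denominator is positive. This is implicit in the statement, since otherwise RV is undefined; a degenerate E2Tree with all singleton nodes would give $\hat{O}_c=0$. All four properties then come from Cauchy--Schwarz together with entrywise non-negativity.

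For (i), the numerator $\sum_{i\neq j}o_{ij}\hat{o}_{ij}$ is a sum of products of non-negative entries, so it is at least $0$, and the denominator is positive; hence $\mathrm{RV}\geq 0$. For the upper bound I apply Cauchy--Schwarz in $\mathbb{R}^{n\times n}$ with the Frobenius inner product: $\langle O_c,\hat{O}_c\rangle_F\leq\|O_c\|_F\|\hat{O}_c\|_F$, which gives $\mathrm{RV}\leq 1$.

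For (ii), I use the equality case of Cauchy--Schwarz. Equality holds exactly when $O_c$ and $\hat{O}_c$ are linearly dependent, i.e.\ $O_c=c\,\hat{O}_c$ for some real $c$ (nonzero, because both matrices are nonzero). The value $+1$ rather than $-1$ forces $c>0$. Conversely, substituting $O_c=c\hat{O}_c$ with $c>0$ gives $c\|\hat{O}_c\|_F^2/(|c|\,\|\hat{O}_c\|_F^2)=1$.

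For (iii), the denominator is positive, so RV vanishes exactly when the numerator does. By non-negativity this is equivalent to $o_{ij}\hat{o}_{ij}=0$ for all $i\neq j$, i.e.\ the off-diagonal supports are disjoint. That is the stated orthogonality.

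For (iv), replacing $O$ by $aO$ with $a>0$ multiplies the numerator by $a$ and the denominator by $\sqrt{a^2}=a$. The same holds for scaling $\hat{O}_T$. Here I will note that "scaling" means scaling the off-diagonal part $O_c$, since the diagonal is discarded (and is fixed at $1$ anyway).

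The only real obstacle is the degenerate case of a zero matrix and the sign in (ii); both are settled by the positivity assumption and the non-negativity of entries. Everything else is routine.
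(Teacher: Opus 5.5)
Your proof is correct and follows the same route as the paper, which simply cites the Cauchy--Schwarz inequality for the Frobenius inner product together with entrywise non-negativity to make the trace product non-negative. Your version fills in details the paper leaves implicit: the requirement $O_c \neq 0$ and $\hat{O}_c \neq 0$ for RV to be defined, and the sign argument forcing $c > 0$ in (ii).
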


\begin{proof}
	These follow from the Cauchy--Schwarz inequality applied to the Frobenius inner product space. Non-negativity of both matrices ensures non-negative trace products.
\end{proof}

Unlike the element-wise divergences (nLoI, Hellinger, wRMSE), the RV coefficient captures \emph{global structural similarity} without requiring element-wise correspondence. This is valuable because E2Tree may approximate the overall pattern of $O$ even when individual entries differ. However, the scale invariance (property iv) means that the RV coefficient cannot distinguish between a perfect reconstruction and one that preserves structure but at a different scale. The significance of the RV coefficient can be assessed through permutation testing \citep{Josse2008}, through the approximation to a scaled $\chi^2$ distribution or via asymptotic tests valid in high dimensions \citep{Ahmad2019}..

\subsection{Structural Similarity Index (SSIM)}
\label{sec:ssim}

The SSIM \citep{Wang2004}, originally developed for image quality assessment, evaluates local structural patterns by computing similarity within a sliding window:
\begin{equation}
	\label{eq:ssim}
	\mathrm{SSIM}(O, \hat{O}_T) = \frac{1}{|\mathcal{W}|} \sum_{w \in \mathcal{W}} \frac{(2\mu_{O,w}\mu_{\hat{O},w} + C_1)(2\sigma_{O\hat{O},w} + C_2)}{(\mu_{O,w}^2 + \mu_{\hat{O},w}^2 + C_1)(\sigma_{O,w}^2 + \sigma_{\hat{O},w}^2 + C_2)},
\end{equation}
where $\mathcal{W}$ is the set of window positions, $\mu_{\cdot,w}$ and $\sigma_{\cdot,w}^2$ denote local means and variances within window $w$, $\sigma_{O\hat{O},w}$ is the local covariance, and $C_1 = (0.01L)^2$, $C_2 = (0.03L)^2$ are stabilization constants with $L = 1$ (the dynamic range of $[0,1]$ matrices).

The application of SSIM to proximity matrices is natural: both $O$ and $\hat{O}_T$ can be viewed as grayscale images, and the block-diagonal structure of $\hat{O}_T$ represents a spatial pattern that SSIM is specifically designed to evaluate. The three components of the SSIM formula capture luminance similarity (local means), contrast similarity (local variances), and structural correlation (local covariance), providing a richer comparison than purely element-wise measures.

\begin{proposition}[Properties of SSIM]
	\label{prop:ssim_properties}
	The SSIM satisfies:
	\begin{enumerate}[label=(\roman*)]
		\item $-1 \leq \mathrm{SSIM} \leq 1$, with $\mathrm{SSIM} = 1$ if and only if $O = \hat{O}_T$ within every window.
		\item $\mathrm{SSIM}$ is symmetric in $O$ and $\hat{O}_T$.
		\item $\mathrm{SSIM}$ can be computed in $O(n^2)$ time using cumulative-sum convolution.
	\end{enumerate}
\end{proposition}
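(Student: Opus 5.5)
The plan is to work window by window and then average. Since every window obeys the same bound and the same symmetry, properties (i) and (ii) pass from a single window to $\mathrm{SSIM}$, because an average of numbers in $[-1,1]$ stays in $[-1,1]$. Fix a window $w$ and write $\mu_1,\mu_2$ for the two local means, $\sigma_1^2,\sigma_2^2$ for the two local variances and $\sigma_{12}$ for the local covariance. The local index then factors as
\[
S_w = \underbrace{\frac{2\mu_1\mu_2 + C_1}{\mu_1^2+\mu_2^2+C_1}}_{\ell_w}\cdot\underbrace{\frac{2\sigma_{12}+C_2}{\sigma_1^2+\sigma_2^2+C_2}}_{s_w}.
\]

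\textbf{Bounding the luminance factor $\ell_w$.} The entries lie in $[0,1]$, so $\mu_1,\mu_2\ge 0$. The numerator of $\ell_w$ is therefore positive. It is also at most the denominator, because $(\mu_1-\mu_2)^2\ge 0$. Hence $\ell_w\in(0,1]$, with $\ell_w=1$ if and only if $\mu_1=\mu_2$.

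\textbf{Bounding the structure factor $s_w$.} By Cauchy--Schwarz, $|\sigma_{12}|\le\sigma_1\sigma_2$. By AM--GM, $2\sigma_1\sigma_2\le\sigma_1^2+\sigma_2^2$. Together these give $|2\sigma_{12}|\le\sigma_1^2+\sigma_2^2$. Because $C_2>0$, this yields $-1<s_w\le 1$.

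\textbf{Combining.} Multiplying the two bounds gives $|S_w|\le 1$. Averaging over $\mathcal{W}$ then gives $-1\le\mathrm{SSIM}\le1$.

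\textbf{Equality case.} The index satisfies $\mathrm{SSIM}=1$ exactly when $S_w=1$ for every window, since each $S_w\le 1$. Because $\ell_w\in(0,1]$ and $s_w\le 1$, the product $S_w$ can reach $1$ only when $\ell_w=1$ and $s_w=1$.
\begin{itemize}
\item $\ell_w=1$ forces $\mu_1=\mu_2$.
\item $s_w=1$ forces equality in both Cauchy--Schwarz and AM--GM. That means $\sigma_1=\sigma_2$ and $\sigma_{12}=\sigma_1\sigma_2$.
\item Equality in Cauchy--Schwarz with positive covariance makes the centred window vectors positively proportional. Equal norms then make them identical.
\item If both variances vanish, both windows are constant, and equal means again force identical windows.
\end{itemize}
So the two window patches coincide. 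The converse is immediate by substitution.

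\textbf{Main obstacle.} The step needing most care is the equality case, specifically the degenerate windows with zero variance and the claim that $C_2$ does not spoil the characterization. Here I would use that $a\mapsto(a+C)/(b+C)$ equals $1$ iff $a=b$ whenever $b+C>0$. The statement "within every window" also needs to be read as equality of the entries covered by the windows. If $\mathcal{W}$ covers all entries, this gives $O=\hat O_T$.

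\textbf{Symmetry (ii).} Exchanging $O$ and $\hat O_T$ swaps $\mu_1\leftrightarrow\mu_2$ and $\sigma_1^2\leftrightarrow\sigma_2^2$ and leaves $\sigma_{12}$ unchanged. Every expression in $S_w$ is symmetric under this swap, and the window set $\mathcal{W}$ does not depend on the ordering of the arguments.

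\textbf{Complexity (iii).} The plan is to form cumulative-sum (integral-image) tables for five $n\times n$ arrays: $O$, $\hat O_T$, $O^{\circ2}$, $\hat O_T^{\circ2}$ and $O\circ\hat O_T$ (entrywise products). Each table costs $O(n^2)$ to build. Each window sum then comes from four table lookups. From these window sums, the means, variances and covariance of any window follow in $O(1)$ via $\sigma^2=E[X^2]-\mu^2$ and $\sigma_{12}=E[XY]-\mu_1\mu_2$. There are at most $n^2$ window positions, so the total cost is $O(n^2)$ for any fixed window size, and indeed independently of the window size.
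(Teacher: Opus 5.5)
Your proof is correct. The paper states this proposition without any proof: the text goes straight from the statement to the summary table. So there is no argument of the authors' to compare against, and your proof supplies exactly what the paper omits. Its pieces are the factorization $S_w=\ell_w s_w$, the bounds $\ell_w\in(0,1]$ and $-1<s_w\le 1$, the reduction of $\mathrm{SSIM}=1$ to $\ell_w=s_w=1$ in every window, and the integral-image argument for (iii).

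Two assumptions you use implicitly are worth stating, because the paper's definition leaves them open.
\begin{enumerate}
\item The local variances and covariance must be computed with the same nonnegative window weights and the same normalization. Otherwise Cauchy--Schwarz, and hence $|s_w|\le 1$, can fail. For example, a $1/(N-1)$ covariance paired with $1/N$ variances gives $s_w>1$ when the two windows coincide.
\item The cumulative-sum argument presupposes a uniform (box) window, and so does your remark that the cost is independent of window size. With the Gaussian window of the original SSIM you would use separable convolution instead. That costs $O(n^2k)$ for a $k\times k$ window, which is still $O(n^2)$ for fixed $k$.
\end{enumerate}

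The structure step can also be streamlined. Note that $\sigma_1^2+\sigma_2^2\mp 2\sigma_{12}$ are the local variances of $O\mp\hat O_T$. Both bounds on $s_w$ then follow from the nonnegativity of a variance, with no need for Cauchy--Schwarz or AM--GM. Moreover, $s_w=1$ holds if and only if $O-\hat O_T$ is constant on the window. Together with $\mu_1=\mu_2$, this gives equality of the window patches at once and removes the case split on vanishing variances.
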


\subsection{Summary and Comparison of Measures}

Table~\ref{tab:measures_comparison} summarizes the theoretical properties of all measures, including the Mantel test as a baseline.

\begin{table}[h!]
	\centering
	\caption{Comparison of divergence and similarity measures for proximity matrix comparison.}
	\label{tab:measures_comparison}
	\small
	\begin{tabular}{lcccccc}
		\toprule
		Property & nLoI & Hellinger & wRMSE & RV & SSIM & Mantel \\
		\midrule
		Type & divergence & divergence & divergence & similarity & similarity & correlation \\
		Range & $[0, 1]$ & $[0, 1]$ & $[0, 1]$ & $[0, 1]$ & $[-1, 1]$ & $[-1, 1]$ \\
		Scale-sensitive & Yes & Yes & Yes & No & Yes & No \\
		Sparse-robust & Moderate & Excellent & Good & Moderate & Good & Poor \\
		Element-wise & Yes & Yes & Yes & No & No & Yes \\
		Metric & Yes & Yes & No$^a$ & No & No & No \\
		Zero at identity & Yes & Yes & Yes & No$^b$ & No$^b$ & No$^b$ \\
		\bottomrule
		\multicolumn{7}{l}{\footnotesize $^a$ wRMSE does not satisfy the triangle inequality in general.} \\
		\multicolumn{7}{l}{\footnotesize $^b$ Similarity measures equal 1 (not 0) at perfect agreement.}
	\end{tabular}
\end{table}

The measures form a complementary family. The nLoI, Hellinger, and wRMSE are \emph{divergence measures} (lower values indicate better reconstruction), while the RV coefficient and SSIM are \emph{similarity measures} (higher values indicate better reconstruction). The divergence measures are element-wise (they sum over pair-level contributions), while the RV captures global structure and SSIM captures local spatial patterns. The Hellinger distance is the only measure that simultaneously satisfies all metric axioms, has a bounded range, and exhibits robust behavior under sparsity.

\section{Permutation Testing Framework}
\label{sec:permutation}

We develop a unified permutation testing framework that applies to all measures in the proposed family. A critical component of this framework is the choice of permutation scheme, which determines the null hypothesis being tested.

\subsection{Hypotheses}

For divergence measures (nLoI, Hellinger, wRMSE):
\begin{itemize}
	\item $H_0$: The E2Tree structure is not associated with the ensemble structure.
	\item $H_1$: The observed divergence is significantly \emph{lower} than expected under random label correspondence.
\end{itemize}

For similarity measures (RV, SSIM):
\begin{itemize}
	\item $H_0$: The E2Tree structure is not associated with the ensemble structure.
	\item $H_1$: The observed similarity is significantly \emph{higher} than expected under random label correspondence.
\end{itemize}

\subsection{The Row/Column Permutation Scheme}
\label{sec:perm_scheme}

The null distribution is generated by applying \emph{simultaneous row/column permutations} to $\hat{O}_T$. For each permutation replicate, a random permutation $\pi$ of $\{1, \ldots, n\}$ is drawn, and the permuted matrix $\hat{O}_T^\pi = [\hat{o}_{\pi(i)\pi(j)}]$ is constructed by applying $\pi$ to both rows and columns simultaneously.

\begin{algorithm}[h!]
	\caption{Unified Permutation Test for E2Tree Validation}
	\label{alg:permutation}
	\begin{algorithmic}[1]
		\Require $O$, $\hat{O}_T$; divergence/similarity functions $d_1, \ldots, d_K$; permutations $R$; level $\alpha$
		\Ensure $p$-values, confidence intervals, $Z$-statistics for each measure
		\State Compute observed statistics: $\tau_k^{\text{obs}} \gets d_k(O, \hat{O}_T)$ for $k = 1, \ldots, K$
		\For{$r = 1, \ldots, R$}
		\State $\pi \gets$ random permutation of $\{1, \ldots, n\}$
		\State $\hat{O}_T^\pi \gets \hat{O}_T[\pi, \pi]$ \Comment{Simultaneous row/column permutation}
		\For{$k = 1, \ldots, K$}
		\State $\tau_k^{(r)} \gets d_k(O, \hat{O}_T^\pi)$
		\EndFor
		\EndFor
		\For{$k = 1, \ldots, K$}
		\State $\hat{\mu}_{0,k} \gets R^{-1} \sum_{r} \tau_k^{(r)}$; \quad $\hat{\sigma}_{0,k} \gets \text{sd}(\tau_k^{(1)}, \ldots, \tau_k^{(R)})$
		\If{$d_k$ is a divergence}
		\State $p_k \gets (1 + \sum_{r} \{\tau_k^{(r)} \leq \tau_k^{\text{obs}}\}) / (1 + R)$
		\Else
		\State $p_k \gets (1 + \sum_{r} \{\tau_k^{(r)} \geq \tau_k^{\text{obs}}\}) / (1 + R)$
		\EndIf
		\State $Z_k \gets (\tau_k^{\text{obs}} - \hat{\mu}_{0,k}) / \hat{\sigma}_{0,k}$
		\State Compute CI as in equations~\eqref{eq:perm_ci_raw}--\eqref{eq:perm_ci_bounded}
		\EndFor
	\end{algorithmic}
\end{algorithm}

This procedure has three key advantages over the value-shuffle scheme:

\begin{enumerate}[label=(\roman*)]
	\item \textbf{Preservation of internal structure.} Row/column permutation preserves the block-diagonal structure of $\hat{O}_T$: the permuted matrix $\hat{O}_T^\pi$ has the same block sizes, the same within-block values, and the same overall sparsity pattern as $\hat{O}_T$. Only the correspondence between observation labels and blocks is randomized.

	\item \textbf{Correct null hypothesis.} The row/column permutation tests whether the \emph{labeling} of observations in $\hat{O}_T$ corresponds to the labeling in $O$. This is the appropriate null for E2Tree validation: under $H_0$, the E2Tree groups observations randomly with respect to the ensemble's proximity structure.

	\item \textbf{Consistency with established methodology.} The row/column permutation scheme is identical to the one used in the Mantel test \citep{Mantel1967} and PROTEST \citep{Peres-Neto2001}, ensuring methodological coherence when comparing our measures against the Mantel baseline.
\end{enumerate}

In contrast, value-shuffle permutation (randomly reassigning individual $\hat{o}_{ij}$ values across positions) destroys the block structure of $\hat{O}_T$, generating null matrices that do not resemble any plausible E2Tree output. This tests a different as well as a weaker null hypothesis and can produce misleading null distributions.

\subsection{Efficiency of the Unified Framework}
The efficiency of permutation testing in ensemble contexts has been studied by \citet{Hapfelmeier2023}, who develop strategies for reducing the resampling cost when testing variable importance measures in random forests. Our unified framework achieves analogous efficiency gains by reusing a single set of $R$ permutations across all $K$ measures (Algorithm~\ref{alg:permutation}, lines 2--7). This ensures that: (i) all measures are evaluated on identical permuted matrices, enabling direct comparison of $p$-values; (ii) the total computational cost is $O(R \cdot K \cdot f(n))$ where $f(n)$ is the cost of computing a single measure, rather than $O(K \cdot R \cdot f(n))$ with separate permutation loops; and (iii) the correlation structure among null distributions is preserved, facilitating joint interpretation.

\subsection{Confidence Intervals}

The permutation distribution provides a basis for constructing confidence intervals. Let $Q_{\alpha/2}$ and $Q_{1-\alpha/2}$ denote the $\alpha/2$ and $1-\alpha/2$ quantiles of the permutation distribution $\{\tau_1, \ldots, \tau_R\}$. The $(1-\alpha)$ confidence interval is:
\begin{equation}
	\label{eq:perm_ci_raw}
	\text{CI}_L = \tau_{\text{obs}} + \hat{\mu}_0 - Q_{1-\alpha/2}, \qquad \text{CI}_U = \tau_{\text{obs}} + \hat{\mu}_0 - Q_{\alpha/2}.
\end{equation}

For non-negative divergence measures, the lower bound is truncated:
\begin{equation}
	\label{eq:perm_ci_bounded}
	\text{CI}_{1-\alpha}^{\text{perm}} = \left[\max(0, \text{CI}_L), \; \text{CI}_U\right].
\end{equation}

For similarity measures bounded in $[0,1]$ (or $[-1,1]$ for SSIM), both bounds are appropriately clamped.

The $p$-value computation includes the Phipson--Smyth correction \citep{phipson2010permutation}, adding 1 to both numerator and denominator to ensure $p \in [1/(R+1),\, 1]$ and prevent the theoretically impossible value of $p = 0$. When the number of permutations $R$ is large but the cost of each evaluation is non-trivial, sequential Monte Carlo approaches such as those of \citet{GandyHahn2014} can guarantee that test decisions match those that would be obtained with exact $p$-values, thereby controlling the \emph{resampling risk} that an approximate $p$-value may fall on the wrong side of the significance threshold \citep{GandyHahnDing2020}.

\subsection{Standardized Test Statistic}

The $Z$-score:
\begin{equation}
	\label{eq:zscore}
	Z = \frac{\tau_{\text{obs}} - \hat{\mu}_0}{\hat{\sigma}_0},
\end{equation}
provides a scale-invariant quantification of the departure from the null expectation. For divergence measures, large negative $Z$ indicates strong reconstruction fidelity; for similarity measures, large positive $Z$ serves the same role.

\section{Monte Carlo Simulation Study}
\label{sec:simulation}

We conduct three simulation experiments to evaluate: (i) Type~I error control under the null hypothesis, (ii) statistical power as a function of signal strength, and (iii) robustness to matrix sparsity. The design follows the general structure of extended Monte Carlo simulation studies for permutation-based inference \citep{Ditzhaus2025}, where Type~I error and power are assessed jointly across a factorial grid of configurations.

\subsection{Data Generation}
\label{sec:sim_design}

For each simulation replicate, we generate a pair of matrices $(O, \hat{O})$ with controlled properties:

\begin{itemize}
	\item \textbf{Ensemble matrix $O$} (fuzzy): A symmetric $n \times n$ matrix with block structure reflecting $K$ groups. Within-group entries are drawn from $\mathcal{N}(0.7, 0.15^2)$ and between-group entries from $\mathcal{N}(0.1, 0.15^2)$, both truncated to $[0,1]$.

	\item \textbf{E2Tree matrix $\hat{O}$} (crisp): A block-diagonal matrix with binary structure ($\hat{o}_{ij} = 1$ within blocks, $\hat{o}_{ij} = 0$ between blocks), mimicking E2Tree's partition structure.

	\item \textbf{Signal parameter $s \in [0,1]$}: Controls the correspondence between $\hat{O}$'s block structure and $O$'s group structure. At $s = 0$, $\hat{O}$ uses a random permutation of group labels (H$_0$); at $s = 1$, $\hat{O}$ perfectly matches $O$'s groups. Intermediate values reassign a fraction $1 - s$ of observations to random groups.

	\item \textbf{Sparsity parameter}: Controls the proportion of between-group entries in $O$ forced to zero, simulating increased sparsity.
\end{itemize}

\subsection{Simulation 1: Type I Error Control}
\label{sec:sim_size}

\subsubsection{Design}

We evaluate whether each measure maintains the nominal Type~I error rate under $H_0$ (signal $= 0$). The design crosses three sample sizes ($n \in \{50, 100, 200\}$) with three group counts ($K \in \{3, 5, 10\}$), yielding 9 configurations. For each configuration, we generate 200 replicates, each tested with 999 permutations at $\alpha = 0.05$. The measures evaluated are nLoI, Hellinger, wRMSE, RV, and Mantel.

\subsubsection{Results}

Table~\ref{tab:type1} reports the empirical rejection rates. Under correct Type~I error control, rejection rates should fall within the 95\% binomial confidence interval $[0.025, 0.070]$ for 200 replicates at $\alpha = 0.05$.

\begin{table}[h!]
	\centering
	\caption{Empirical rejection rates under $H_0$ (signal $= 0$). Nominal level $\alpha = 0.05$, 200 replicates, 999 permutations per replicate. All rates fall within the acceptable range $[0.025, 0.070]$.}
	\label{tab:type1}
	\small
	\begin{tabular}{rr|ccccc}
		\toprule
		$n$ & $K$ & nLoI & Hellinger & wRMSE & RV & Mantel \\
		\midrule
		50  & 3  & 0.040 & 0.045 & 0.045 & 0.045 & 0.045 \\
		50  & 5  & 0.055 & 0.045 & 0.045 & 0.045 & 0.055 \\
		50  & 10 & 0.045 & 0.050 & 0.045 & 0.040 & 0.030 \\
		100 & 3  & 0.065 & 0.065 & 0.065 & 0.065 & 0.060 \\
		100 & 5  & 0.055 & 0.060 & 0.050 & 0.050 & 0.055 \\
		100 & 10 & 0.030 & 0.035 & 0.030 & 0.035 & 0.050 \\
		200 & 3  & 0.030 & 0.025 & 0.025 & 0.030 & 0.030 \\
		200 & 5  & 0.030 & 0.040 & 0.050 & 0.050 & 0.045 \\
		200 & 10 & 0.055 & 0.060 & 0.070 & 0.055 & 0.050 \\
		\bottomrule
	\end{tabular}
\end{table}

All measures maintain rejection rates within the acceptable bounds across all 9 configurations. No systematic inflation or deflation is observed, confirming that the row/column permutation scheme provides valid inference for all measures in the family.

\subsubsection{Permutation Scheme Comparison}

To verify that the choice between row/column and value-shuffle permutation does not affect Type~I error control, we conducted an auxiliary experiment comparing both schemes for nLoI and Hellinger across 100 replicates at $n = 100$, $K = 5$, signal $= 0$. Both schemes produced rejection rates consistent with the nominal $\alpha = 0.05$ level. Under $H_0$, where no structure links $O$ to $\hat{O}_T$, both permutation schemes yield valid null distributions. The distinction becomes important under $H_1$, where value-shuffle can inflate Type~I error by generating implausible null matrices.

\subsection{Simulation 2: Statistical Power}
\label{sec:sim_power}

\subsubsection{Design}

We evaluate the power of each measure to detect genuine E2Tree reconstruction across signal strengths $s \in \{0, 0.1, 0.2, \ldots, 1.0\}$, sample sizes $n \in \{50, 100, 200\}$, with $K = 5$ groups, sparsity $= 0.3$. Each of the 33 configurations is replicated 200 times with 999 permutations. The measures evaluated are nLoI, Hellinger, wRMSE, and RV.

\subsubsection{Results}

Figure~\ref{fig:power_curves} displays the power curves, and Table~\ref{tab:power_key} reports rejection rates at selected signal levels.

\begin{figure}[h!]
	\centering
	\includegraphics[width=\textwidth]{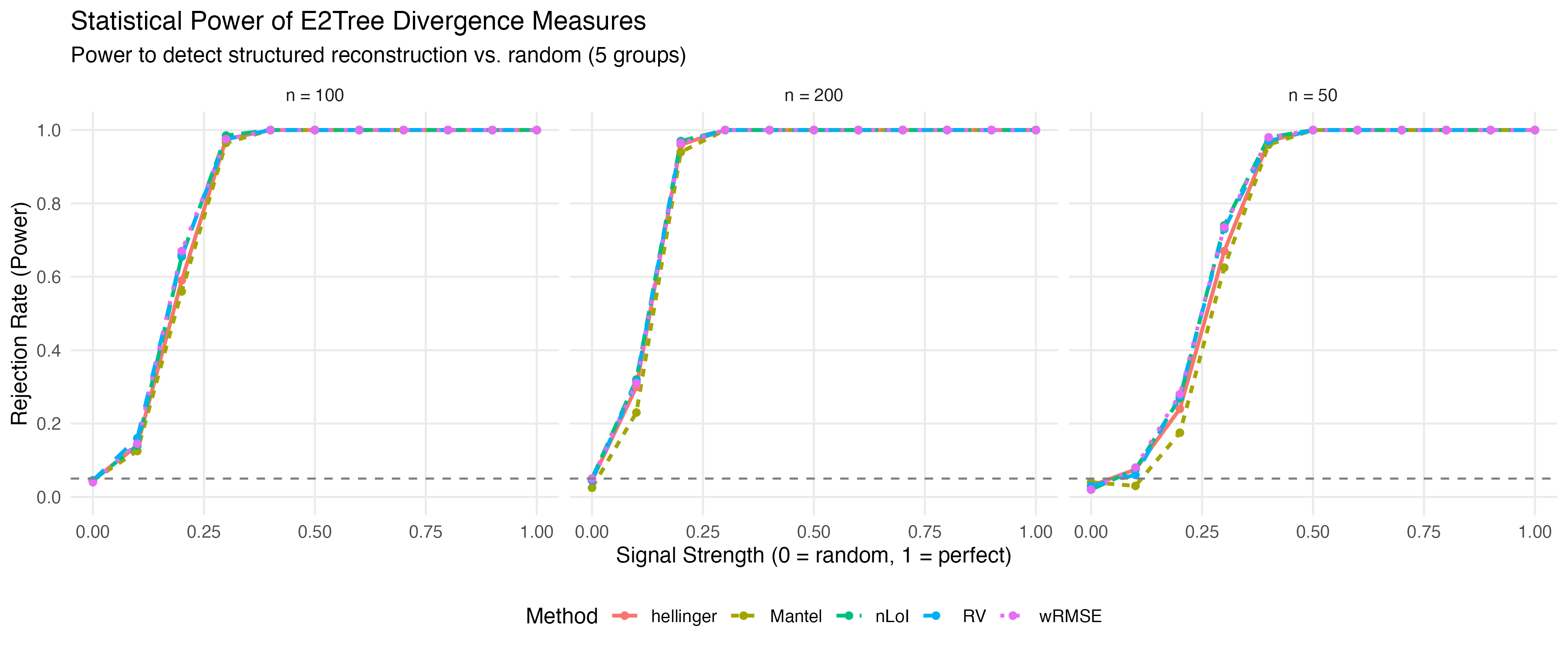}
	\caption{Statistical power as a function of signal strength for five measures (nLoI, Hellinger, wRMSE, RV, Mantel), faceted by sample size $n$. Dashed horizontal line indicates the nominal level $\alpha = 0.05$. Five groups, sparsity $= 0.3$, 200 replicates per configuration.}
	\label{fig:power_curves}
\end{figure}

\begin{table}[h!]
	\centering
	\caption{Statistical power at selected signal levels ($K = 5$, sparsity $= 0.3$, 200 replicates).}
	\label{tab:power_key}
	\small
	\begin{tabular}{rr|ccccc}
		\toprule
		$n$ & Signal & nLoI & Hellinger & wRMSE & RV & Mantel \\
		\midrule
		50  & 0.0 & 0.020 & 0.030 & 0.020 & 0.030 & 0.040 \\
		50  & 0.1 & 0.075 & 0.075 & 0.080 & 0.060 & 0.030 \\
		50  & 0.2 & 0.270 & 0.240 & 0.280 & 0.275 & 0.175 \\
		50  & 0.3 & 0.740 & 0.670 & 0.730 & 0.730 & 0.625 \\
		50  & 0.4 & 0.980 & 0.970 & 0.980 & 0.970 & 0.960 \\
		\midrule
		100 & 0.0 & 0.045 & 0.045 & 0.040 & 0.045 & 0.045 \\
		100 & 0.1 & 0.140 & 0.140 & 0.140 & 0.160 & 0.125 \\
		100 & 0.2 & 0.655 & 0.590 & 0.670 & 0.660 & 0.560 \\
		100 & 0.3 & 0.985 & 0.975 & 0.970 & 0.975 & 0.965 \\
		\midrule
		200 & 0.0 & 0.050 & 0.045 & 0.050 & 0.045 & 0.025 \\
		200 & 0.1 & 0.320 & 0.300 & 0.310 & 0.315 & 0.230 \\
		200 & 0.2 & 0.970 & 0.960 & 0.960 & 0.965 & 0.940 \\
		200 & 0.3 & 1.000 & 1.000 & 1.000 & 1.000 & 1.000 \\
		\bottomrule
	\end{tabular}
\end{table}

Two conclusions stand out. All measures achieve adequate power, reaching $\geq 0.95$ by signal $= 0.4$ at $n = 50$, and by signal $= 0.3$ for $n \geq 100$; for $s \geq 0.5$, power is uniformly 1.0. The permutation framework thus provides sufficient sensitivity across the full range of practically relevant signal strengths.

The measures also show essentially equivalent power. Differences at low signal ($s \in [0.1, 0.3]$) are small: at $n = 50$, $s = 0.3$, power ranges from 0.625 (Mantel) to 0.740 (nLoI). This near-equivalence is not a weakness of the proposal: it confirms that the choice among measures should be guided by the \emph{question being asked}, not by differential sensitivity. The Mantel test does not lack power; it measures the wrong quantity for reconstruction validation.

\subsection{Simulation 3: Robustness to Sparsity}
\label{sec:sim_sparsity}

\subsubsection{Design}

We evaluate how power degrades as the matrices become sparser. The design crosses two sample sizes ($n \in \{100, 200\}$) with three group counts ($K \in \{5, 10, 20\}$) and five sparsity levels ($\{0, 0.2, 0.4, 0.6, 0.8\}$), all at moderate signal ($s = 0.6$). Since $\hat{O}_T$ has inherent sparsity of approximately $1 - 1/K$ (the fraction of zero off-diagonal entries), increasing $K$ from 5 to 20 raises the effective sparsity from 0.80 to 0.95.

\subsubsection{Results}

All measures achieved power $= 1.0$ across all 30 configurations, including the most extreme case ($K = 20$ groups, additional sparsity $= 0.8$, effective sparsity $\approx 0.95$). At the chosen signal level ($s = 0.6$) and sample sizes ($n \geq 100$), no measure showed any degradation due to sparsity.

The result confirms robustness to the structural sparsity inherent in E2Tree matrices: even when 95\% of the entries in $\hat{O}_T$ are zero, the permutation test correctly detects the reconstruction signal. The saturation at $s = 0.6$ also indicates that differential sparsity effects are most likely to surface at weaker signals ($s \in [0.1, 0.3]$), where the power curves in Section~\ref{sec:sim_power} already exhibit measure-level differentiation. Finer exploration of the sparsity~$\times$~signal interaction, and in particular the regime where the Hellinger distance's variance-stabilising transformation confers a practical edge over the nLoI, remains a natural direction for future work.

\section{Empirical Evaluation}
\label{sec:empirical}

We evaluate the proposed measures on three benchmark datasets spanning classification and regression tasks. For each dataset, a Random Forest ensemble is trained, the ensemble co-occurrence matrix $O$ is computed, and an E2Tree is constructed to produce $\hat{O}_T$. All five measures (nLoI, Hellinger, wRMSE, RV, SSIM) are then computed with 999 row/column permutations. The datasets used in this study (Iris, mtcars, and Boston Housing) are standard benchmarks and are openly accessible through the Kaggle repository.

\subsection{Iris (Classification)}

The Iris dataset consists of 150 observations across three species with four morphological features. Following standard practice, 70\% of the data ($n = 105$) is used for training. A Random Forest with 500 trees is fitted, and the E2Tree is constructed from the resulting co-occurrence matrix.

Figure~\ref{fig:matrices} illustrates the structural difference between the two co-occurrence matrices. The Random Forest matrix $O$ (left panel) exhibits a ``fuzzy'' appearance with continuous values reflecting varying degrees of co-occurrence across the ensemble's trees. In contrast, the E2Tree matrix $\hat{O}_T$ (right panel) displays the characteristic block structure: observations within the same terminal node show high co-occurrence (dark), while observations in different terminal nodes have zero co-occurrence (light). Despite this fundamental structural constraint, the E2Tree successfully recovers the main clustering pattern of the ensemble.

\begin{figure}[h!]
	\centering
	\subfloat[Random Forest co-occurrence matrix $O$]{%
		\includegraphics[width=0.45\textwidth]{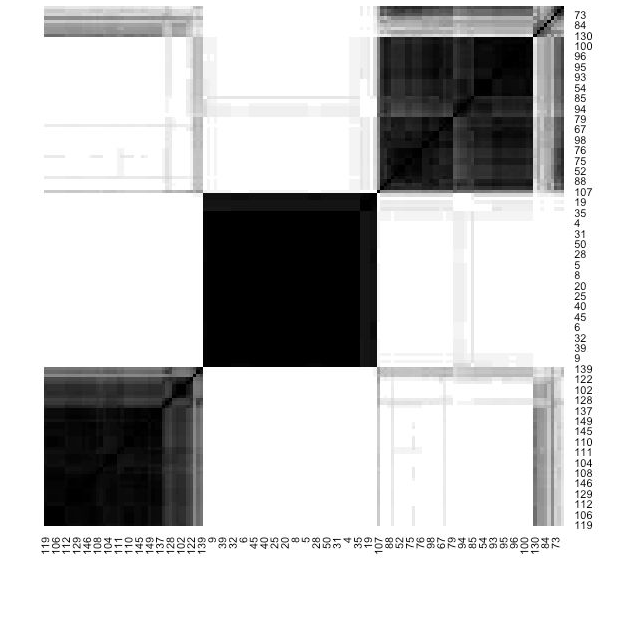}}
	\hfill
	\subfloat[E2Tree co-occurrence matrix $\hat{O}_T$]{%
		\includegraphics[width=0.45\textwidth]{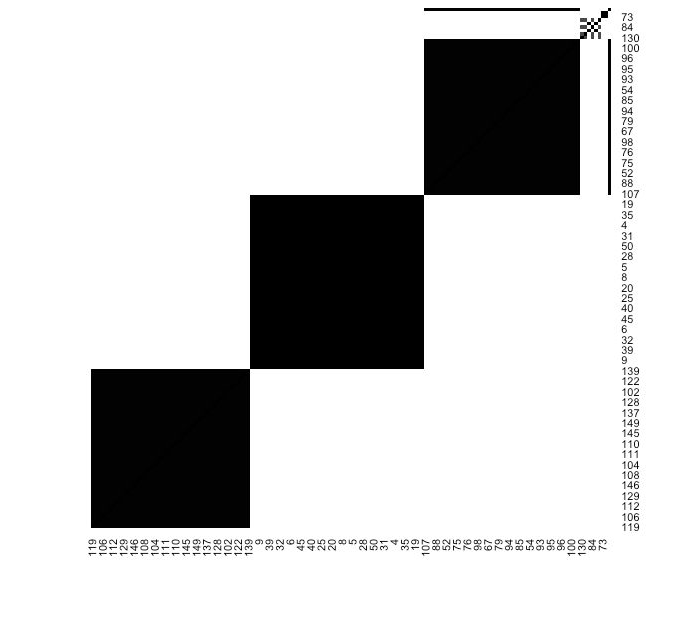}}
	\caption{Comparison of co-occurrence matrices for the Iris dataset.}
	\label{fig:matrices}
\end{figure}

Table~\ref{tab:iris_results} presents the results for all five measures.

\begin{table}[h!]
	\centering
	\caption{Multi-measure validation results for the Iris dataset ($n = 105$, $R = 999$).}
	\label{tab:iris_results}
	\small
	\begin{tabular}{llrrrrr}
		\toprule
		Method & Type & Observed & Null mean & Null SD & $Z$ & $p$ \\
		\midrule
		nLoI      & div & 0.049 & 0.406 & 0.005 & $-72.3$  & $< 0.001$ \\
		Hellinger & div & 0.211 & 0.624 & 0.004 & $-104.1$ & $< 0.001$ \\
		wRMSE     & div & 0.206 & 0.831 & 0.004 & $-171.4$ & $< 0.001$ \\
		RV        & sim & 0.960 & 0.327 & 0.009 & $+72.1$  & $< 0.001$ \\
		SSIM      & sim & 0.582 & 0.009 & 0.005 & $+110.2$ & $< 0.001$ \\
		\bottomrule
	\end{tabular}
\end{table}

All measures provide overwhelming evidence of significant reconstruction ($p < 0.001$ for all). The nLoI of 0.049 indicates extremely low per-pair divergence. The Hellinger distance of 0.211 (on a $[0,1]$ scale) indicates good but not perfect reconstruction, consistent with the visual comparison in Figure~\ref{fig:matrices}. The RV coefficient of 0.960 confirms near-perfect global structural agreement.

\subsection{mtcars (Regression)}

The mtcars dataset contains 32 observations with 10 predictors for fuel consumption prediction. A 70\% training split yields $n = 22$. Despite the small sample size, this dataset provides a useful test of the measures' behavior in a regression setting. A Random Forest with 500 trees is fitted.

\begin{table}[h!]
	\centering
	\caption{Multi-measure validation results for the mtcars dataset ($n = 22$, $R = 999$).}
	\label{tab:auto_results}
	\small
	\begin{tabular}{llrrrrr}
		\toprule
		Method & Type & Observed & Null mean & Null SD & $Z$ & $p$ \\
		\midrule
		nLoI      & div & 0.144 & 0.338 & 0.015 & $-13.0$  & $< 0.001$ \\
		Hellinger & div & 0.344 & 0.543 & 0.014 & $-14.6$ & $< 0.001$ \\
		wRMSE     & div & 0.384 & 0.700 & 0.016 & $-20.4$ & $< 0.001$ \\
		RV        & sim & 0.845 & 0.338 & 0.039 & $+13.0$  & $< 0.001$ \\
		SSIM      & sim & 0.500 & 0.138 & 0.037 & $+9.9$  & $< 0.001$ \\
		\bottomrule
	\end{tabular}
\end{table}

The regression setting presents a more challenging reconstruction task. The nLoI of 0.144 and Hellinger distance of 0.344 indicate larger reconstruction discrepancy than the Iris case, consistent with the greater complexity of regression proximity structures and the small sample size. Nonetheless, all measures achieve $p < 0.001$, confirming significant reconstruction even with only 22 observations. The RV coefficient of 0.845 indicates strong global structural agreement. The SSIM of 0.500, substantially higher than might be expected from a small matrix, suggests that the block structure is well-preserved.

\subsection{Boston Housing (Regression)}

The Boston Housing dataset contains 506 observations with 13 predictors. A 70\% training split yields $n = 354$. The larger matrix size and more complex regression structure provide a stringent test.

\begin{table}[h!]
	\centering
	\caption{Multi-measure validation results for the Boston Housing dataset ($n = 354$, $R = 999$).}
	\label{tab:boston_results}
	\small
	\begin{tabular}{llrrrrr}
		\toprule
		Method & Type & Observed & Null mean & Null SD & $Z$ & $p$ \\
		\midrule
		nLoI      & div & 0.176  & 0.211  & 0.0004 & $-85.4$    & $< 0.001$ \\
		Hellinger & div & 0.373  & 0.443  & 0.0008 & $-90.9$    & $< 0.001$ \\
		wRMSE     & div & 0.892  & 0.924  & 0.0003 & $-92.3$    & $< 0.001$ \\
		RV        & sim & 0.518  & 0.163  & 0.004  & $+85.0$    & $< 0.001$ \\
		SSIM      & sim & 0.220  & 0.015  & 0.001  & $+272.0$   & $< 0.001$ \\
		\bottomrule
	\end{tabular}
\end{table}

The Boston Housing results demonstrate that all divergence measures remain in their theoretical $[0,1]$ range: the nLoI of 0.176, Hellinger distance of 0.373, and wRMSE of 0.892 are all bounded as expected. Interestingly, the nLoI and Hellinger values are comparable to the mtcars case, suggesting similar per-pair reconstruction quality despite the much larger matrix ($M = 62{,}481$ vs.\ $231$). The RV coefficient of 0.518, while lower than Iris (0.960) and mtcars (0.845), still indicates substantial structural agreement and is highly significant.

\subsection{Cross-Dataset Summary}

Table~\ref{tab:cross_dataset} provides a comparative summary across all three datasets.

\begin{table}[h!]
	\centering
	\caption{Cross-dataset comparison of E2Tree reconstruction quality.}
	\label{tab:cross_dataset}
	\small
	\begin{tabular}{lrrrrrrr}
		\toprule
		Dataset & $n$ & Task & nLoI & Hellinger & RV & SSIM & All $p$ \\
		\midrule
		Iris           & 105 & classif. & 0.049 & 0.211 & 0.960 & 0.582 & $< 0.001$ \\
		mtcars         &  22 & regr.    & 0.144 & 0.344 & 0.845 & 0.500 & $< 0.001$ \\
		Boston Housing & 354 & regr.    & 0.176 & 0.373 & 0.518 & 0.220 & $< 0.001$ \\
		\bottomrule
	\end{tabular}
\end{table}

The results are consistent across all three settings: every measure detects significant reconstruction ($p < 0.001$), and the divergence measures remain within their theoretical $[0,1]$ bounds, confirming the normalization is well-calibrated. Reconstruction quality follows the task structure: the Iris classification problem, with well-separated classes, yields the highest fidelity (nLoI $= 0.049$, Hellinger $= 0.211$), while Boston Housing presents the most challenging regression surface (nLoI $= 0.176$, Hellinger $= 0.373$). The RV coefficient tracks the same ordering (0.960, 0.845, 0.518) and proves particularly informative for ranking global structural agreement across datasets.

\section{Discussion}
\label{sec:discussion}

\subsection*{Agreement vs.\ Association: Why the Distinction Matters}

E2Tree validation is, at its core, a problem of \emph{agreement}, not of \emph{association}. The distinction, long settled in clinical method comparison \citep{Bland1986,Lin1989}, extends beyond that domain. \citet{GandyMatcham2025} demonstrate an analogous failure mode in survival analysis, showing that standard concordance indices can systematically prefer an inferior model when hazard functions cross, a direct consequence of measuring rank-based association rather than proper predictive agreement. The lesson generalizes: a measure that is scale-invariant or rank-based will declare ``better'' a model that is actually worse in absolute terms. The present work advances the same argument for proximity matrix comparison.

Consider two scenarios for an E2Tree with $K$ terminal nodes applied to a classification problem with three well-separated classes:

\begin{itemize}
	\item \textbf{Scenario A}: The E2Tree correctly identifies all three classes. Within each terminal node, the ensemble proximity is $o_{ij} \approx 0.8$, while the E2Tree assigns $\hat{o}_{ij} = 1$. Between nodes, $o_{ij} \approx 0.1$ and $\hat{o}_{ij} = 0$. The Mantel correlation is very high ($r > 0.95$) because the pattern is preserved. But the nLoI reveals a non-trivial $LoI_\text{in}$ component: the within-node values are systematically overestimated.

	\item \textbf{Scenario B}: The E2Tree merges two similar classes into one terminal node. Within the merged node, the ensemble proximity is bimodal ($o_{ij} \approx 0.8$ for same-class pairs, $o_{ij} \approx 0.4$ for cross-class pairs), while $\hat{o}_{ij} = 1$ for all. The Mantel correlation may still be high (the broad pattern is preserved), but the nLoI detects a larger $LoI_\text{in}$, and, crucially, the decomposition reveals \emph{why}: the problem is within-node heterogeneity, not between-node separation.
\end{itemize}

In both scenarios, the Mantel test would declare ``significant association'' with $p < 0.001$. The nLoI with its decomposition tells the practitioner \emph{how much} the reconstruction deviates and \emph{where} the deviation occurs. This is the difference between a diagnostic tool and a pass/fail test.

Table~\ref{tab:goi_interpretation} summarizes the key indicators for a comprehensive assessment.

\begin{table}[h!]
	\centering
	\caption{Assessment criteria for E2Tree validation.}
	\label{tab:goi_interpretation}
	\begin{tabular}{lll}
		\toprule
		Criterion & Favorable outcome & Interpretation \\
		\midrule
		$p$-value & $< 0.05$ & Reconstruction significantly better than chance \\
		$Z$-score & Large $|Z|$ & Strong departure from null \\
		CI & Separated from null & Reliable reconstruction \\
		Divergence (nLoI, $H$) & Close to 0 & High reconstruction fidelity \\
		$\bar{\ell}_\text{out}$ & $< 0.1$ & Partition correctly separates low-proximity pairs \\
		$\bar{\ell}_\text{in}$ & $< 0.01$ & Within-node values well-calibrated \\
		\bottomrule
	\end{tabular}
\end{table}

\subsection*{The Decomposition as a Diagnostic Tool}

The decomposability of the LoI (Proposition~\ref{prop:decomposability}) is the primary methodological contribution that distinguishes our approach from correlation-based alternatives. As discussed in Remark~\ref{rem:decomposability}, the raw totals $LoI_\text{in}$ and $LoI_\text{out}$ are not directly comparable due to the different cardinalities of within-node and between-node pair sets. The per-pair averages $\bar{\ell}_\text{in}$ and $\bar{\ell}_\text{out}$ (equation~\ref{eq:mean_in_out}) provide the appropriate diagnostic quantities:

\begin{itemize}
	\item $\bar{\ell}_\text{out}$ (average ensemble proximity lost per separated pair): values below 0.1 indicate the partition correctly separates low-proximity pairs; values above 0.3 indicate the E2Tree is splitting apart pairs with substantial ensemble proximity.
	\item $\bar{\ell}_\text{in}$ (average calibration error per grouped pair): values below 0.01 indicate excellent within-node reconstruction; values above 0.1 indicate substantial calibration discrepancy.
\end{itemize}

No association-based measure (Mantel, RV, or otherwise) can provide this type of directional guidance.

\subsection*{Choosing Among Measures}

Since the simulation results confirm that all measures achieve comparable statistical power (Section~\ref{sec:sim_power}), the choice among measures should be guided by the analytical question rather than by power considerations:

\begin{itemize}
	\item \textbf{nLoI and decomposition}: When the goal is to \emph{diagnose} reconstruction quality and guide model improvement. The per-pair averages $\bar{\ell}_\text{in}$ and $\bar{\ell}_\text{out}$ are unique to the nLoI.

	\item \textbf{Hellinger distance}: When a single bounded, interpretable number is needed for reporting. Its range $[0,1]$, metric properties, and robustness to sparsity make it the most portable summary statistic.

	\item \textbf{RV coefficient}: When the question is ``does the E2Tree preserve the ensemble's structural pattern?'' rather than ``does it reproduce the values?'' The RV is the appropriate measure when scale differences are expected and acceptable.

	\item \textbf{SSIM}: When block structure preservation is of primary interest, e.g., evaluating whether cluster boundaries are correctly identified.
\end{itemize}

We recommend reporting the nLoI with its decomposition alongside the Hellinger distance: the nLoI provides diagnostics, and the Hellinger provides a clean summary. The permutation $p$-value and $Z$-score should always accompany the point estimates.

\subsection*{Relationship to Alternative Approaches}

  The family of measures proposed here is not the only approach to comparing two proximity matrices. \citet{Xu2025} develops asymptotic distribution-free tests based on Maximum Mean Discrepancy (MMD) for two-sample comparisons, which establish Gaussian limiting distributions and avoid the computational cost of permutation resampling entirely.
While the MMD framework addresses a related but distinct inferential question (whether two samples are drawn from the same distribution, rather than whether a particular reconstruction matches a particular target), it represents a complementary approach that could in principle be adapted to the E2Tree validation setting. An alternative two-sample strategy based on random forests has been proposed by \citet{Hediger2022}, who exploit the out-of-bag error of a classifier trained to distinguish between two samples as a test statistic; this
approach is particularly natural in the E2Tree context, where the ensemble itself is a random forest, and the proximity matrices $O$ and $\hat{O}_T$ encode the same observations under different grouping structures. A formal comparison between the permutation-based approach developed here and these distribution-free alternatives (both MMD and random-forest-based) is a natural direction for future work.

\section{Conclusion}
\label{sec:conclusion}

Validating E2Tree reconstruction quality requires measuring \emph{agreement} rather than mere \emph{association} between the ensemble and E2Tree proximity matrices. The Mantel test and other correlation-based approaches can confirm that two matrices share a common pattern, but they cannot quantify how closely the reconstructed values match the originals, nor can they identify the source of reconstruction failures. The measures proposed here fill that gap.

The primary methodological contribution is the normalized Loss of Interpretability (nLoI), a robustified Neyman-type statistic connected to the Cressie--Read power divergence family ($\lambda = -2$). Its defining feature is \emph{decomposability} into within-node ($LoI_\text{in}$) and between-node ($LoI_\text{out}$) components. Because the raw totals are not directly comparable, since within-node and between-node pair sets differ vastly in cardinality, we work instead with the per-pair averages $\bar{\ell}_\text{in}$ and $\bar{\ell}_\text{out}$. These two quantities tell the practitioner whether reconstruction failure originates in suboptimal partition boundaries ($\bar{\ell}_\text{out}$ large) or in poor within-node calibration ($\bar{\ell}_\text{in}$ large). No scalar association measure can provide this directional guidance.

The nLoI is complemented by four measures (Hellinger distance, wRMSE, RV coefficient, and SSIM), each targeting a distinct aspect of matrix agreement. The simulation study shows that all measures achieve comparable power and correct Type~I error control; the choice among them is therefore guided by the analytical question, not by statistical efficiency. For general use, the Hellinger distance is our recommended default: its bounded range, metric properties, and robustness to sparsity make it the most portable single-number summary.

Underpinning all five measures is a unified permutation testing framework based on simultaneous row/column permutation, which delivers valid inference for the entire family within a single computational pass.

Three extensions merit attention. The first, the asymptotic behavior of the measures as $n \to \infty$ and $B \to \infty$ remains an open theoretical question. Then, the per-pair averages $\bar{\ell}_\text{in}$ and $\bar{\ell}_\text{out}$ suggest a natural route toward automated hyperparameter selection for E2Tree, where the decomposition could drive tree complexity in a principled way. Finally, extending the framework to gradient boosting, XGBoost, and other interpretable surrogates beyond E2Tree would establish it as a general-purpose tool for approximation fidelity in ensemble learning.

\subsection*{Author Contributions}

Massimo Aria: Conceptualization, Methodology, Software, Writing -- review and editing, Project administration.
Agostino Gnasso: Conceptualization, Methodology, Software, Formal analysis, Writing -- Original draft preparation.
Carmela Iorio: Conceptualization, Methodology, Formal analysis, Writing -- review and editing.

\bibliographystyle{plainnat}

\bibliography{loi_ref}

\end{document}